\documentclass{article}

\PassOptionsToPackage{numbers, compress}{natbib}
\usepackage[preprint]{neurips_2026}

\usepackage{amssymb}
\usepackage[utf8]{inputenc} 
\usepackage[T1]{fontenc}    
\usepackage{hyperref}       
\usepackage{url}            
\usepackage{booktabs}       
\usepackage{amsfonts}       
\usepackage{nicefrac}       
\usepackage{microtype}      
\usepackage{xcolor}         
\usepackage{amsmath}
\usepackage{amssymb}
\usepackage{mathtools}
\usepackage{amsthm}
\usepackage{subfigure}
\usepackage{longtable}
\usepackage{booktabs} 
\usepackage{caption}
\usepackage{multirow}
\usepackage{bbm}
\usepackage{tcolorbox}
\usepackage{float}
\tcbuselibrary{skins, breakable}
\usepackage[capitalize,noabbrev]{cleveref}
\usepackage{bbding}
\theoremstyle{plain}
\newtheorem{theorem}{Theorem}[section]

\theoremstyle{definition}

\theoremstyle{remark}

\title{Boosting LLM Reasoning via Human-Inspired Reward Shaping}

\author{%
  Wenze Lin\thanks{Equal contribution} \\
  Qiuzhen College, Tsinghua University \\
  \And
  Zhen Yang\footnotemark[1] \\
  Purdue University\\
  \And
  Xitai Jiang\footnotemark[1] \\
  Qiuzhen College, Tsinghua University \\
  \And
  Xiaoteng Ma \\
  Mind Lab \\
  \And
  Gao Huang\thanks{Corresponding author: \texttt{gaohuang@tsinghua.edu.cn}} \\
  Tsinghua University \\
}
\begin{document}

\maketitle

\begin{abstract}

Reinforcement Learning with Verifiable Rewards (RLVR) has emerged as a promising paradigm for enhancing reasoning in Large Language Models (LLMs). However, existing reward formulations typically treat exploration and consolidation as a monolithic process, resulting in entangled stage-wise learning dynamics. This contradicts the natural learning behavior of human learners. In human learning, individuals adopt distinct behavioral patterns toward mastered versus unfamiliar problems. When confronting unmastered challenges, humans prioritize broad exploration to seek viable solutions. By contrast, for well-mastered problems, they focus instead on reasoning condensation and knowledge abstraction to distill concise underlying principles. Motivated by this gap, we introduce \textbf{T2T} (Thickening-to-Thinning), a dynamic reward framework inspired by human learning processes. Specifically, it implements a dual-phase mechanism: (1) On incorrect attempts, T2T incentivizes "thickening" to broaden the search space and explore novel solution paths; (2) Upon achieving correctness, it shifts to "thinning", imposing length penalties to discourage redundancy, thereby fostering model confidence and crystallizing reasoning capabilities. Extensive experiments on mathematical benchmarks (MATH-500, AIME, AMC) across 5 mainstream LLMs demonstrate that T2T significantly outperforms standard GRPO and recent baselines, achieving superior performance.

\end{abstract}

\section{Introduction}

Psychological studies reveal that when faced with unfamiliar and unsolved problems, humans instinctively engage in diverse trials to seek viable solutions \citep{schneider1977controlled,fitts1967human,Anderson1980CognitivePA}, a tendency particularly prominent in reasoning tasks \citep{wood1976role}. For instance, when encountering difficult problems in mathematical examinations, students often attempt a variety of approaches and fill their answer sheets with trial reasoning even without a clear final solution. In contrast, as a problem gradually becomes familiar and mastered, humans naturally condense their problem-solving procedures to refine internal knowledge \citep{shiffrin1977controlled2,fitts1967human,Anderson1980CognitivePA}. A typical example is the Pythagorean theorem: primary school students tend to re-derive the theorem every time they apply it shortly after learning it. As their knowledge accumulates and proficiency grows, however, they directly deploy the theorem without repeated derivation, since the knowledge has already been internally consolidated and refined. Taken together, human learning follows a clear two-stage pattern \citep{fitts1967human,kim2013integrated,Anderson1980CognitivePA}. For unmastered challenges, learning starts with an expansive exploration phase: humans try multiple approaches, explore alternative decompositions, and tolerate verbosity and redundancy as a necessary cost of discovery. As mastery deepens, learning transitions into a second phase—reasoning is summarized and abstracted, and unnecessary details are stripped away to form compact, precise representations for efficient retention and reuse.

\begin{figure*}[htbp] 
  \centering
  
  \begin{minipage}[t]{0.48\textwidth}
    \centering
    \includegraphics[width=1.0\linewidth]{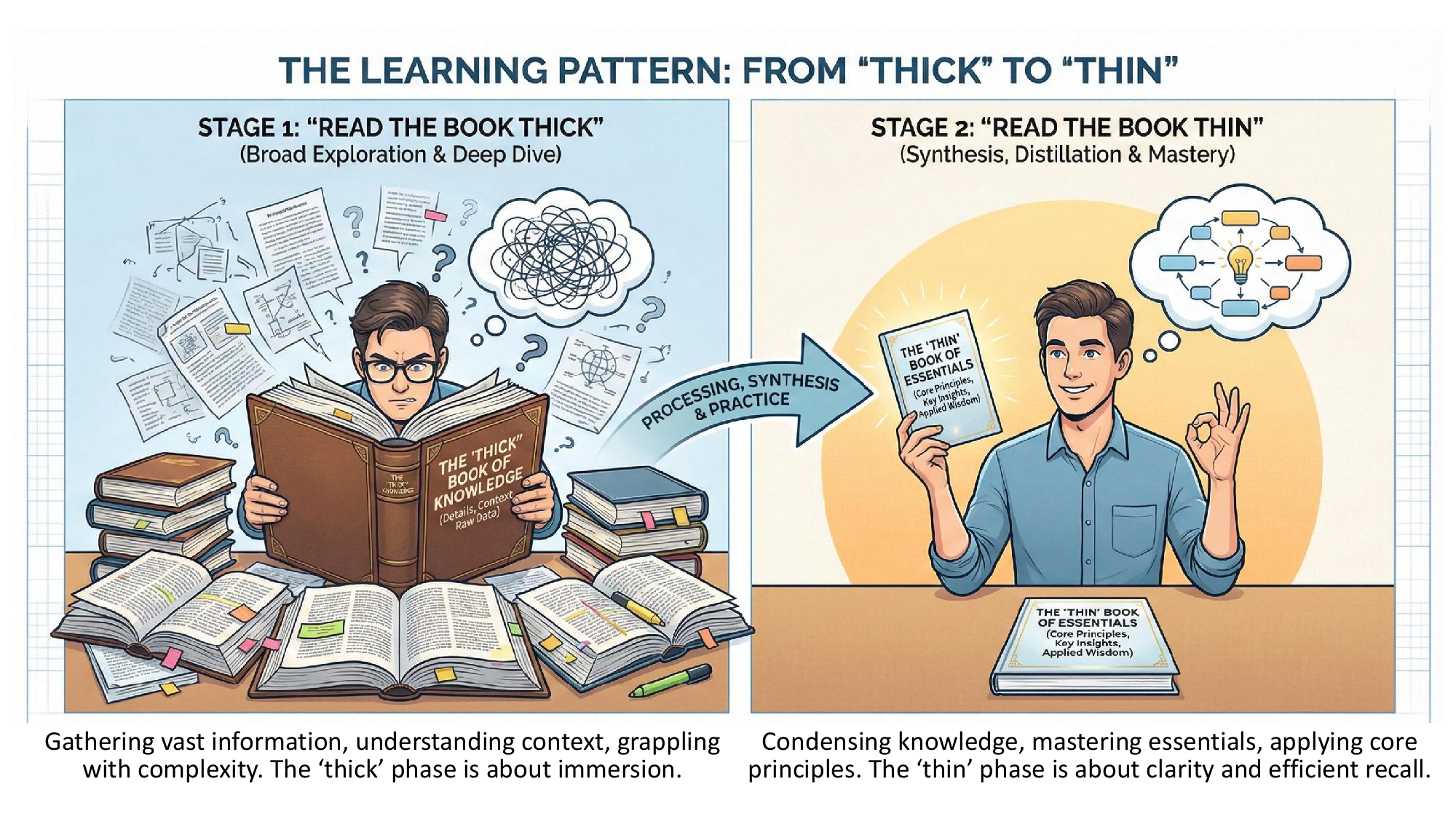}
    \caption{A two-stage learning pattern in human learning, where understanding is first expanded through exploration and later refined into concise and efficient knowledge.}
    \label{fig:t2t_paradigm}
  \end{minipage}
  \hfill 
  \begin{minipage}[t]{0.48\textwidth}
    \centering
    \includegraphics[width=1.0\linewidth]{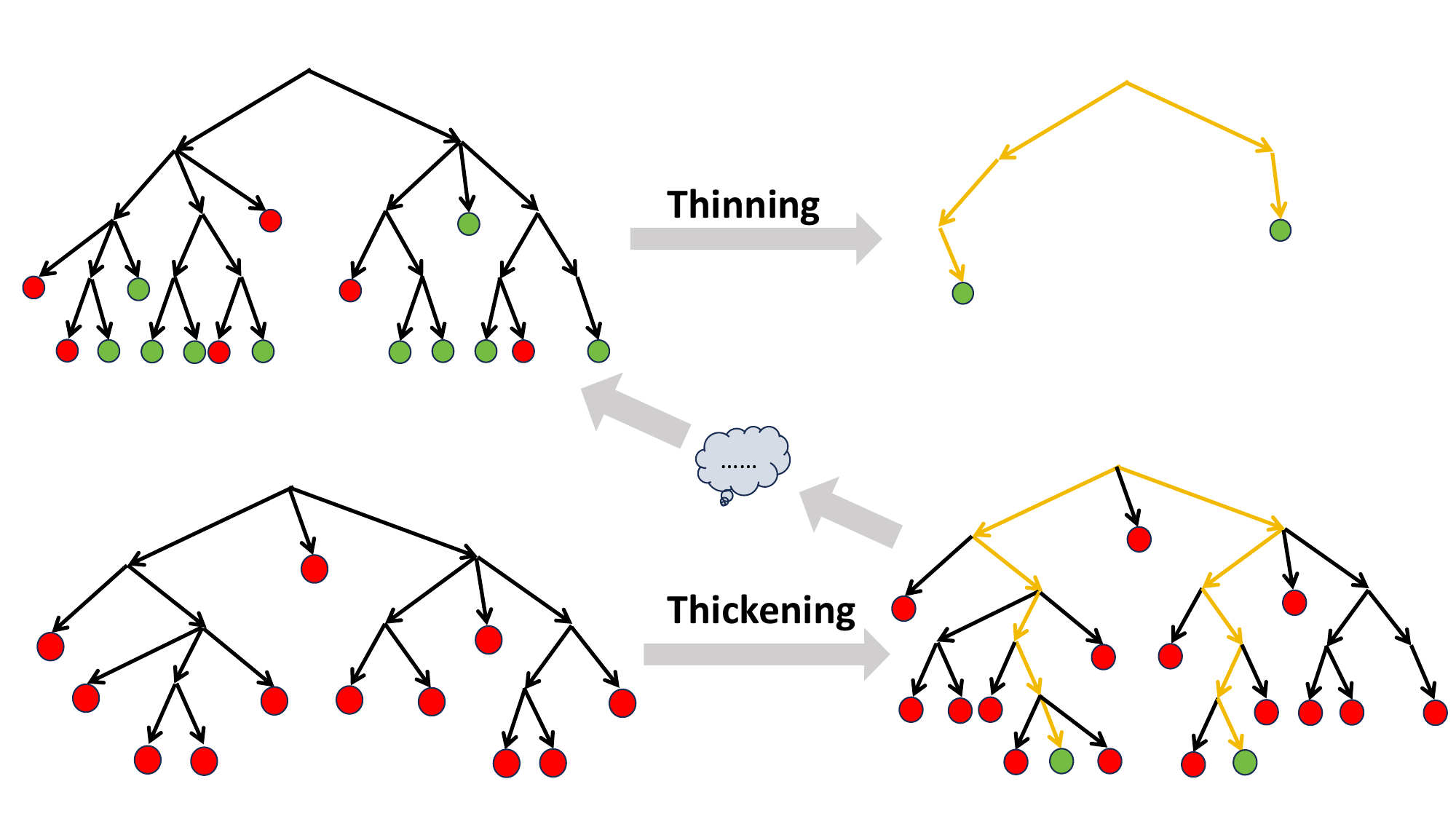}
    \caption{(1) On incorrect attempts, T2T incentivizes "thickening" to broaden the search space; (2) Upon correctness, it shifts to "thinning" to discourage redundancy, fostering model confidence.}
    \label{fig:t2t_mechanism}
  \end{minipage}
  
\end{figure*}

Reinforcement Learning with Verifiable Rewards (RLVR) has emerged as a practical paradigm for enhancing the reasoning capabilities of Large Language Models (LLMs), particularly in automatically verifiable domains such as mathematics and code \citep{DeepSeekR1,DeepSeekMath,jaech2024openai, trinh2024solving,yang2024qwen2}. However, existing RLVR methods misalign with natural human learning dynamics we mentioned before, failing to make distinction between exploration and condensation. While some existing RLVR extensions have attempted to differentiate between positive and negative samples, their optimization objectives are either focused on compressing reasoning trajectories or maintaining entropy stability \citep{NSR,EntroPIC,LASER,Tan2025GTPOAG}. The inherent learning dynamics between exploration for unmastered problems and condensation for mastered ones remain entangled without structural separation.



To address this gap, we propose Thickening-to-Thinning (T2T), a reward shaping method that explicitly embeds this stage-dependent learning dynamic into RLVR. Instead of redefining correctness, T2T modulates reward magnitude based on correctness, difficulty, and response length. It encourages expansive, exploratory reasoning when correctness is unachieved and favors concise, precise solutions once success is attained. What's more, T2T adopts a difficulty-aware incentive principle: for hard incorrect problems, T2T offers stronger encouragement for exploratory reasoning compared with easy failed cases; for simple solved problems, it imposes greater emphasis on concise reasoning than it does for correctly solved hard problems. Notably, T2T retains the simplicity of standard sequence-level RLVR: verification stays at the sequence level, no auxiliary models or token-level supervision are required. Consequently, T2T can be seamlessly integrated into existing RLVR pipelines and extended to various reinforcement learning variants, offering a minimal yet principled mechanism to align optimization objectives with the natural exploration-to-compression progression.

We conduct extensive experiments across five mainstream LLMs and four standard mathematical reasoning benchmarks. Empirical results demonstrate that T2T consistently achieves substantial gains over vanilla GRPO and a series of strong RLVR baselines. On the most capable model in our suite, Qwen3-14B, T2T improves AIME'25 Pass@64 from $51.5$ to $57.9$ ($+6.4$), while pushing AMC'23 Pass@64 from $94.6$ to $98.4$. Comparable improvements hold at smaller scales (e.g., Qwen3-4B AIME'24 Pass@64: $48.6 \to 56.3$) and across architectures (Llama-3.2-3B-Instruct AMC'23 Pass@64: $43.7 \to 49.1$), confirming T2T's strong cross-model transferability and broad applicability to diverse mathematical reasoning tasks.

\section{Background and Related Works}

\paragraph{Reinforcement Learning for Reasoning} 
Reinforcement learning (RL) has become standard for aligning language models with complex logical constraints \citep{ouyang2022training,jaech2024openai,trinh2024solving,yang2024qwen2}. Iterative training frameworks like STaR \citep{zelikman2022star} and ReST \citep{singh2023rest} utilize the model's own successful trajectories as bootstrap supervision. More recent policy optimization methods, such as PPO \citep{schulman2017proximal} and Group Relative Policy Optimization (GRPO) \citep{DeepSeekR1,DeepSeekMath}, directly optimize reasoning performance against outcome-based or process-based rewards \citep{lightman2023let}. However, standard RL objectives for reasoning typically treat the reasoning chain length as a byproduct or explicitly penalize verbosity to reduce costs \citep{dubois2024alpacafarm,LASER,sui2025stop,hou2025thinkprune,huang2026hapo}. In contrast, our T2T approach treats trajectory length as instrument to realize human learning dynamic, rather than a goal itself.

\paragraph{Variants and Extensions of RLVR.}
\label{app:related:passk_optimization}
A growing body of work has sought to address limitations of standard RLVR, including the mismatch between pass@1 improvements and large-$k$ behavior, entropy collapse, and excessive verbosity.
One line of work explicitly targets pass@k-style objectives or their surrogates~\citep{PKPO,bagirov2025bestofn,RSPO,chen2025passk,peng2025simko,RS-GRPO}.
Another line focuses on theoretical or statistical corrections to the optimization process itself, for example via entropy control~\cite{EntroPIC,li2025cure,jiang2025rethinking,tan2025gtpo,cheng2026reasoning}, structured pruning or brevity-promoting objectives~\citep{hou2025thinkprune, LASER}, or bias-aware alternatives to GRPO~\citep{DrGRPO, yang2026your}.
These approaches are primarily motivated by formal objective mismatch or optimization pathologies, and propose principled modifications to the learning objective or update rule. In a different vein, NSR~\cite{NSR} demonstrates that training solely on negative samples can still yield substantial capability improvements, highlighting the robustness of reward-driven learning signals.
Our work follows a different design philosophy.
Rather than deriving corrections from theoretical optimality criteria, we adopt a heuristic perspective inspired by human learning behavior, and modify the reward objective accordingly.
Since our proposed T2T method operates at the level of reward design, 
it is orthogonal to such framework-level extensions and can be applied alongside them 
without additional algorithmic overhead.

\paragraph{RLVR with Length Control.}
A series of work have incorporated length control mechanisms into RLVR ~\cite{LASER,sui2025stop,xiang2025just,hammoud2025train,hou2025thinkprune,huang2026hapo}. However, most prior works adopt explicit length penalties to compress inference overhead or reduce overthinking, aiming to maintain reasoning performance at lower computational cost. Their focus is all on efficient reasoning. In contrast, our method differs fundamentally from these approaches. For T2T, length modulation is not an optimization goal in itself, but an instrument to instantiate our human-inspired learning dynamics. Our primary objective is to boost the reasoning capability of LLMs by embedding the thickening-to-thinning learning paradigm into RLVR. We treat reasoning length as a dynamic exploration budget: it encourages the model to expand its reasoning search space on challenging unsolved problems, and drives concise, confident responses once the corresponding reasoning skill is fully mastered. In T2T, adaptive length modulation serves only as a supportive means to achieve stronger reasoning performance, rather than being pursued as an end goal.

\section{Preliminaries}
\subsection*{Reinforcement Learning with Verifiable Rewards}

Let $\mathcal{V}:\mathcal{Q}\times\mathcal{O}\to\{0,1\}$ denote a verifier that checks whether an output $o$ solves query $q$:
\begin{equation}
\mathcal{V}(q,o)=
\begin{cases}
1, & \text{if $o$ is verified as correct for $q$},\\
0, & \text{otherwise}.
\end{cases}
\end{equation}
Fix an interaction protocol (prompting, decoding strategy, and any hard constraints) and a model policy
$\pi_\theta(o\mid q)$ over outputs $o\in\mathcal{O}$ given a query $q$.
Define the single-sample success probability
\begin{equation}
p_\theta(q)
~:=~
\Pr_{o\sim\pi_\theta(\cdot\mid q)}[\,\mathcal{V}(q,o)=1\,].
\end{equation}

For clarity, we consider sequence-level reinforcement learning with verifiable rewards and use it to illustrate the algorithms throughout this paper.
Given a query $q \sim \mathcal{D}$ and a sequence-level output $o \sim \pi_\theta(\cdot | q)$,
the learning objective is to maximize the expected reward:
\begin{equation}
    \mathcal{J}(\pi_\theta)
    =
    \mathbb{E}_{q \sim \mathcal{D},\, o \sim \pi_\theta(\cdot \mid q)}
    \bigl[ R(q, o) \bigr].
\end{equation}
In GRPO and related RLVR methods, the reward function
$R(q,o)$ is typically defined as a binary indicator reflecting the
correctness of the generated output.
Under this definition, the objective reduces to maximizing the model’s
success probability under single-sample decoding:
\begin{equation}
\mathcal{J}(\pi_\theta)
=
\mathbb{E}_{q \sim \mathcal{D}}
\Big[
\mathbb{E}_{o \sim \pi_\theta(\cdot \mid q)} \big[ \mathcal{V}(q,o) \big]
\Big]
=
\mathbb{E}_{q \sim \mathcal{D}} \big[ p_\theta(q) \big],
\end{equation}
where $p_\theta(q) \triangleq \Pr_{o \sim \pi_\theta(\cdot \mid q)}\!\big(\mathcal{V}(q,o)=1\big)$
denotes the probability that the model produces a verifier-accepted output
for query $q$.

Using the policy gradient theorem, the gradient of the objective can be written as
\begin{equation}
    \nabla_\theta \mathcal{J}(\pi_\theta)
    =
    \mathbb{E}_{q \sim \mathcal{D},\, o \sim \pi_\theta(\cdot \mid q)}
    \Bigl[
        A^{\pi_\theta}(q,o)\,
        \nabla_\theta \log \pi_\theta(o \mid q)
    \Bigr],
\end{equation}
where the advantage function is defined as
\begin{equation}
    A^{\pi_\theta}(q,o)
    =
    R(q,o)
    -
    \mathbb{E}_{o' \sim \pi_\theta(\cdot \mid q)}
    \bigl[ R(q,o') \bigr].
\end{equation}

In practice, the expectation over the policy is approximated using $N$ sampled
outputs $\{o_j\}_{j=1}^N$, yielding the empirical estimator
\begin{equation}
    \hat{A}^{\pi_\theta}(q,o_i)
    =
    R(q,o_i)
    -
    \frac{1}{N}
    \sum_{j=1}^N R(q,o_j),
    \quad
    o_j \sim \pi_\theta(\cdot | q).
\end{equation}
This group-wise baseline corresponds to the advantage estimator used in GRPO.

To enable multiple optimization steps on the same batch of data, we introduce the importance sampling ratio between the current policy $\pi_\theta$ and the behavior policy $\pi_{\theta_{\text{old}}}$:\begin{equation}r_\theta(q,o)=\frac{\pi_\theta(o \mid q)}{\pi_{\theta_{\text{old}}}(o \mid q)}.\end{equation}Consequently, the final surrogate objective with clipping—designed to constrain policy updates and ensure training stability—is formulated as:\begin{equation}\begin{aligned}
    &\mathcal{J}_{\text{CLIP}}(\theta)=\mathbb{E}_{q \sim \mathcal{D}, o \sim \pi_{\theta_{\text{old}}}(\cdot \mid q)}\\&\biggl[\min\Bigl(r_\theta(q,o) \hat{A}(q,o), \text{clip}\bigl(r_\theta(q,o), 1-\epsilon, 1+\epsilon\bigr) \hat{A}(q,o)\Bigr)\biggr],
\end{aligned}\end{equation}
where $\epsilon$ is a hyperparameter determining the clipping range, and $\hat{A}(q,o)$ is the group-wise advantage estimator defined previously.

\section{Method}
\label{sec:method}

\subsection{Thickening-to-Thinning Reward Design}
We introduce the \emph{Thickening-to-Thinning (T2T)} reward, a competence-aware reward shaping scheme for RLVR. The design augments the standard 0/1 verifier score with two ingredients: an on-policy pass-rate and a length score.

\paragraph{On-policy pass-rate.}
Following GRPO, for a fixed query $q$ we sample a group of $K$ outputs
$\{o_i\}_{i=1}^K \sim \pi_\theta(\cdot\mid q)$ and compute the on-policy pass-rate
\begin{equation}
\hat p_\theta(q)\;\triangleq\;\frac{1}{K}\sum_{i=1}^K \mathcal{V}(q,o_i).
\end{equation}

\paragraph{Length score.}
Let $\ell(o)$ denote the length of $o$ (e.g., number of tokens). We define a
normalized length score $s_L(o)\in[0,1]$ via clipping:
\begin{equation}
s_L(o)\;\triangleq\;\mathrm{clip}\!\left(\frac{\ell(o)-L_{\min}}{L_{\max}-L_{\min}},\,0,\,1\right).
\end{equation}
Here $L_{\min}$ and $L_{\max}$ are hyper-parameters specifying the effective range for the length score. The length scores of outputs shorter than $L_{\min}$ or longer than $L_{\max}$ are clipped to $0$ and $1$, respectively, so that the shaping signal does not grow unboundedly at extreme lengths.

\paragraph{Thickening-to-Thinning (T2T) reward.}
We define
\begin{equation}
R_{\mathrm{T2T}}(q,o,\theta)=
\begin{cases}
1-\alpha\, s_L(o)\,p_\theta(q), & \mathcal{V}(q,o)=1,\\[2pt]
\alpha\, s_L(o)\,(1-p_\theta(q)), & \mathcal{V}(q,o)=0.
\end{cases}
\qquad \alpha\in(0,\tfrac{1}{2})
\label{eq:R_T2T_piecewise}
\end{equation}
where $\alpha
$ is a hyper-parameter controlling the strength of the T2T shaping. In implementation, $p_\theta(q)$ is replaced by its on-policy estimate $\hat p_\theta(q)$:
\begin{equation}
\hat R_{\mathrm{T2T}}(q,o,\theta)=
\begin{cases}
1-\alpha\, s_L(o)\,\hat p_\theta(q), & \mathcal{V}(q,o)=1,\\[2pt]
\alpha\, s_L(o)\,(1-\hat p_\theta(q)), & \mathcal{V}(q,o)=0.
\end{cases}
\label{eq:R_T2T_piecewise_hat}
\end{equation}
We compute advantages and perform policy optimization following the same
procedure as GRPO. In implementation, $\hat p_\theta(q)$ is treated as a stop-gradient
statistic: although $\hat p_\theta(q)$ is computed from on-policy rollouts and thus
implicitly depends on $\theta$, we detach it before computing the policy
gradient, so that backpropagation only flows through $\log\pi_\theta(o\mid q)$.
This avoids high-variance gradient terms from the group-level estimator and
matches the standard practice in GRPO-style methods. Consequently, the
resulting update signal does not directly correspond to the gradient of any
fixed expected reward; we characterize the objective it actually optimizes in
Section~\ref{sec:analysis_t2t}.

\subsection{Analysis of T2T}
\label{sec:analysis_t2t}

\paragraph{Reward-induced ordering.}
For any fixed query $q$ and $\alpha\in(0,\tfrac12)$, since $s_L,p_\theta(q)\in[0,1]$, we have
\[
\inf_{\mathcal{V}=1}R_{\mathrm{T2T}}=1-\alpha\;>\;\alpha\;\ge\;\sup_{\mathcal{V}=0}R_{\mathrm{T2T}},
\]
so any correct output is rewarded above any incorrect one. Within each group, the shaping term breaks ties: among correct outputs, shorter is preferred ($-\alpha s_L p_\theta(q)$); among incorrect ones, longer is preferred ($+\alpha s_L(1-p_\theta(q))$). This yields the ordering
\[
\text{correct short}\;\succ\;\text{correct long}\;\succ\;\text{incorrect long}\;\succ\;\text{incorrect short},
\]
which prioritizes correctness, encourages longer attempts when the model struggles, and favors concise solutions once correctness is achieved. Unlike binary-reward GRPO, T2T provides non-trivial signals even when sampled outputs are all correct or all incorrect. Furthermore, the introduction of $p_\theta(q)$ ensures that for hard incorrect problems, T2T offers stronger encouragement for exploratory reasoning compared with easy incorrect cases; for simple solved problems, it imposes greater emphasis on concise reasoning than it does for correctly solved hard problems.

\paragraph{Gradient analysis.}
Recall that in implementation $\hat p_\theta(q)$ is detached from the computation graph.
With standard policy-gradient computation, the per-query update signal is then
\begin{equation}
  \frac{1}{K}\sum_{i=1}^K\hat R_{\mathrm{T2T}}(q,o_i,\theta)\, g_\theta(q,o_i),
\end{equation}
which is an estimator of
\begin{equation}
  \mathbb{E}_{o\sim \pi_\theta(\cdot\mid q)}\!\left[R_{\mathrm{T2T}}(q,o,\theta)\, g_\theta(q,o)\right],
  \label{eq:expected_grad_signal}
\end{equation}
where $g_\theta(q,o):=\nabla_\theta \log \pi_\theta(o\mid q)$.
Because $R_{\mathrm{T2T}}$ depends implicitly on $\theta$ through $p_\theta(q)$,
Eq.~\eqref{eq:expected_grad_signal} is \emph{not} equal to
$\nabla_\theta\mathbb{E}_{o\sim \pi_\theta}[R_{\mathrm{T2T}}]$, so the
expression alone does not directly reveal the underlying objective. To clarify
its effect, we provide a decomposition. Define the auxiliary reward
\begin{equation}
    \tilde R(q,o,\theta):=\mathcal{V}(q,o)+\alpha\, s_L(o)\bigl(1-p_\theta(q)\bigr),
\end{equation}
and its expected objective
\begin{equation}
    \tilde J_q(\theta)
    \;:=\;
    \mathbb{E}_{o\sim \pi_\theta(\cdot\mid q)}[\tilde R(q,o,\theta)]
    \;=\;
    p_\theta(q)+\alpha\,\mathbb{E}_{o\sim \pi_\theta(\cdot\mid q)}[s_L(o)]\bigl(1-p_\theta(q)\bigr).
\end{equation}
Notice that
\begin{equation}
R_{\mathrm{T2T}}(q,o,\theta)
\;\triangleq\;
\mathcal{V}(q,o)\bigl(1-\alpha\, s_L(o)\, p_\theta(q)\bigr)
\;+\;\bigl(1-\mathcal{V}(q,o)\bigr)\,\alpha\, s_L(o)\bigl(1- p_\theta(q)\bigr),
\label{eq:R_T2T_gate}
\end{equation}
thus, a direct calculation gives
$R_{\mathrm{T2T}}(q,o,\theta)=\tilde R(q,o,\theta)-\alpha\, s_L(o)\,\mathcal{V}(q,o)$.

\begin{theorem}[T2T update-signal decomposition]
\label{thm:decomposition}
For any fixed query $q$,
\begin{equation}
\mathbb{E}_{o\sim \pi_\theta(\cdot\mid q)}[R_{\mathrm{T2T}}\, g_\theta]
\;=\;
\nabla_\theta \tilde J_q(\theta)
\;+\;
\alpha\,\mathbb{E}_{o\sim \pi_\theta(\cdot\mid q)}\!\left[\bigl(\mathbb{E}[s_L(o)]-s_L(o)\bigr)\,\mathcal{V}(q,o)\, g_\theta(q,o)\right].
\end{equation}
\end{theorem}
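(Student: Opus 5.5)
The plan is a direct computation. I will expand both sides of the claimed identity using the log-derivative trick $\nabla_\theta \mathbb{E}_{o\sim\pi_\theta}[f(o)] = \mathbb{E}_{o\sim\pi_\theta}[f(o)\,g_\theta(q,o)]$, valid for any $f$ not depending on $\theta$. Then I will check that the difference is exactly the stated correction term.

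\textbf{Step 1 (left-hand side).} Use the identity $R_{\mathrm{T2T}}=\tilde R-\alpha s_L\mathcal{V}$ noted just before Theorem~\ref{thm:decomposition}, i.e.\ $R_{\mathrm{T2T}}=\mathcal{V}+\alpha s_L(1-p_\theta(q))-\alpha s_L\mathcal{V}$. Since $p_\theta(q)$ is a scalar independent of $o$, it factors out of the expectation, giving
\[
\mathbb{E}[R_{\mathrm{T2T}}g_\theta]=\mathbb{E}[\mathcal{V}g_\theta]+\alpha(1-p_\theta(q))\,\mathbb{E}[s_L g_\theta]-\alpha\,\mathbb{E}[s_L\mathcal{V}g_\theta].
\]

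\textbf{Step 2 (gradient of $\tilde J_q$).} Start from the closed form $\tilde J_q(\theta)=p_\theta(q)+\alpha\,\mathbb{E}[s_L](1-p_\theta(q))$ and apply the product rule. This is the one place where care is needed: unlike in Step 1, the dependence of $p_\theta(q)$ on $\theta$ must now be differentiated. The log-derivative trick gives $\nabla_\theta p_\theta(q)=\mathbb{E}[\mathcal{V}g_\theta]$ and $\nabla_\theta\mathbb{E}[s_L]=\mathbb{E}[s_L g_\theta]$, because $\mathcal{V}$ and $s_L$ do not depend on $\theta$. Hence
\[
\nabla_\theta\tilde J_q=\mathbb{E}[\mathcal{V}g_\theta]+\alpha(1-p_\theta(q))\,\mathbb{E}[s_Lg_\theta]-\alpha\,\mathbb{E}[s_L]\,\mathbb{E}[\mathcal{V}g_\theta].
\]

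\textbf{Step 3 (subtract).} The first two terms of Steps 1 and 2 cancel, leaving
\[
\mathbb{E}[R_{\mathrm{T2T}}g_\theta]-\nabla_\theta\tilde J_q=\alpha\,\mathbb{E}[s_L]\,\mathbb{E}[\mathcal{V}g_\theta]-\alpha\,\mathbb{E}[s_L\mathcal{V}g_\theta].
\]
Since $\mathbb{E}[s_L(o)]$ is a deterministic constant, it can be pulled inside the expectation, which gives $\alpha\,\mathbb{E}[(\mathbb{E}[s_L]-s_L)\mathcal{V}g_\theta]$, exactly the stated term.

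There is no real obstacle. The only subtle point is the bookkeeping in Step 2: $p_\theta(q)$ is frozen inside the update signal but differentiated inside $\nabla\tilde J_q$. This mismatch is precisely what produces the $-\alpha\mathbb{E}[s_L]\mathbb{E}[\mathcal{V}g_\theta]$ term that becomes the covariance-like correction. I will also state the regularity assumption (finite output space, or differentiation under the integral) needed to justify the log-derivative trick.
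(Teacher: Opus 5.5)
Your proof is correct and follows essentially the same route as the paper: both use the identity $R_{\mathrm{T2T}}=\tilde R-\alpha\, s_L\mathcal{V}$ and the score-function formula $\nabla_\theta p_\theta(q)=\mathbb{E}[\mathcal{V}\,g_\theta]$, then subtract to isolate $\alpha\,\mathbb{E}[(\mathbb{E}[s_L]-s_L)\mathcal{V}g_\theta]$. The only difference is bookkeeping. You expand everything into elementary expectations and apply the product rule to the closed form of $\tilde J_q$, whereas the paper writes $\nabla_\theta\mathbb{E}[\tilde R]=\mathbb{E}[\nabla_\theta\tilde R]+\mathbb{E}[\tilde R\,g_\theta]$ and keeps $\tilde R$ unexpanded.
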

\noindent The proof is given in Appendix~\ref{app:decomposition_proof}.

This decomposition admits a clean interpretation. Viewed as a function of
$p_\theta(q)$ with $\mathbb{E}[s_L(o)]$ held fixed, $\tilde J_q$ is monotone
increasing in $p_\theta(q)$, so its optimum is still attained at
$p_\theta(q)=1$; correctness remains the dominant target. The shaping term
$\alpha\,\mathbb{E}[s_L(o)]\bigl(1-p_\theta(q)\bigr)$ matters most when
$p_\theta(q)$ is small: in that regime, it supplies a non-trivial optimization
signal through length, whereas vanilla binary-reward GRPO often provides almost
no gradient when nearly all sampled outputs are incorrect. This is the
\textbf{thickening} effect.
The second term acts only on correct samples ($\mathcal{V}=1$). Its
coefficient $\mathbb{E}[s_L(o)]-s_L(o)$ is positive for shorter-than-average
correct responses and negative for longer-than-average ones. Hence, once the
model starts producing correct answers, T2T adds an explicit update signal that
favors shorter correct trajectories. This is the \textbf{thinning} effect.

\section{Experiment}
\label{sec:experiment}
\subsection{Experimental Setup}
\paragraph{Models.}
To investigate the scalability and effectiveness of our proposed method across different model capacities, we conduct experiments on the Qwen and DeepSeek series, which are widely recognized for their mathematical reasoning capabilities. Specifically, we utilize \texttt{DeepSeek-R1-Distill-Qwen-1.5B},
\texttt{Llama-3.2-3B-Instruct}, \texttt{Qwen2.5-3B}, \texttt{Qwen3-4B}, and \texttt{Qwen3-14B} as our base policies. For the Qwen3 series, we utilize the \textbf{non-thinking mode}. These models serve as the initialization for all reinforcement learning experiments.

\paragraph{Training Setup.}
We construct our training set using \texttt{DAPO-MATH-17k}~\cite{dapo-math-17k}, a high-quality dataset specifically filtered for mathematical reasoning. We train for a total of 5 epochs. All models are trained using the \textbf{Verl} framework~\cite{verl}. The maximum new tokens is set to 8192 and the group size is set to 8. Detailed hyperparameter configurations are provided in Appendix~\ref{app:hyperparameters}.

\paragraph{Baseline Settings.}
We compare \textbf{T2T} against standard GRPO and three advanced baselines that explicitly differentiate between positive and negative samples to refine the learning signal. Detailed implementation specifics and hyperparameter configurations are provided in Appendix~\ref{app:baseline_settings}. The specific methods are as follows:
\begin{itemize}
    \item \textbf{GRPO}~\cite{DeepSeekR1, DeepSeekMath}: The vanilla Group Relative Policy Optimization algorithm without additional length or gradient modifications.
    \item \textbf{LASER}~\cite{LASER}: A reward shaping method that encourages efficient reasoning by augmenting the reward for positive samples with a length penalty term $\alpha \cdot \mathbb{I}(L < L_{\text{target}})$, while keeping rewards for negative samples unchanged. 
    \item \textbf{W-REINFORCE}~\cite{NSR}: A gradient reweighting approach that retains the full gradient for negative samples to facilitate unlearning, while scaling down the gradient for positive samples by a factor $\rho$.
    \item \textbf{EntroPIC}~\cite{EntroPIC}: An entropy-stabilization method that utilizes Proportional-Integral (PI) control to dynamically adjust the loss weights of positive and negative samples based on the discrepancy between the current policy entropy and a target entropy.
\end{itemize}
We note that due to computational constraints, for \texttt{Llama-3.2-3B-Instruct} and \texttt{Qwen3-14B}, only the standard GRPO baseline is implemented.

\paragraph{Evaluation Setup.}
We evaluate the models on four widely used and challenging mathematical reasoning benchmarks: \textbf{MATH-500}~\cite{math-dataset,lightman2023let}, \textbf{AIME'24}, \textbf{AIME'25}, and \textbf{AMC'23}. 
During evaluation, we sample $n=64$ responses per prompt for MATH-500, and $n=128$ for AIME'24, AIME'25, and AMC'23. The sampling temperature is set to 0.7 with a top-$p$ of 0.95.

We report the \textbf{Pass@$k$} metric ($k \in \{1, 2, 4, 8, 16, 32, 64\}$). 
Specifically, \textbf{Pass@1} is calculated as the mean accuracy over the $n$ generated samples. 
For $k > 1$, to avoid high variance from limited sampling, we employ the unbiased estimator proposed by \citet{chen2021evaluating}. Given $n$ generated samples with $c$ correct responses, Pass@$k$ is calculated as:
\begin{equation}
    \text{Pass}@k = \mathbb{E}\left[ 1 - \frac{\binom{n-c}{k}}{\binom{n}{k}} \right].
\end{equation}

\subsection{Main Results on Mathematical Benchmarks}

\begin{table*}[t]
  \caption{\textbf{Main Results} on mathematical reasoning benchmarks. We report \textbf{Pass@1} and \textbf{Pass@64} performance using the unbiased estimator. The best results are highlighted in \textbf{bold}, and the second best results are \underline{underlined}.}
  \label{tab:main_results}
  \centering
  \resizebox{1.0\textwidth}{!}{
  \begin{tabular}{c|l|cc|cc|cc|cc}
    \toprule
    \multicolumn{1}{c}{\multirow{2}{*}{Base Model}} &
    \multicolumn{1}{c}{\multirow{2}{*}{Method}} &
    \multicolumn{2}{c}{\textbf{MATH-500}} &
    \multicolumn{2}{c}{\textbf{AIME'24}} &
    \multicolumn{2}{c}{\textbf{AIME'25}} &
    \multicolumn{2}{c}{\textbf{AMC'23}} \\
    \cmidrule(lr){3-4} \cmidrule(lr){5-6} \cmidrule(lr){7-8} \cmidrule(lr){9-10}
    \multicolumn{2}{c|}{} & Pass@1 & Pass@64 & Pass@1 & Pass@64 & Pass@1 & Pass@64 & Pass@1 & Pass@64 \\
    \toprule

    \multirow{6}{*}{\rotatebox{0}{DeepSeek-R1-Distill-Qwen-1.5B}} 
    & Initial Model & 66.0 & 85.0 & 20.5 & 46.9 & 14.8 & 28.2 & 50.9 & 86.4 \\
    & GRPO          & \textbf{68.9} & 84.0 & 20.6 & 47.7 & 14.8 & \textbf{28.9} & 57.3 & \textbf{92.7} \\
    & LASER         & 66.5 & 85.4 & 20.8 & 48.3 & 14.9 & 28.4 & 57.8 & 91.5 \\
    & W-REINFORCE   & \underline{67.5} & \underline{85.8} & 20.4 & 46.8 & 14.7 & 28.0 & 58.2 & 91.6 \\
    & EntroPIC      & 66.2 & 84.8 & \underline{21.0} & \underline{49.5} & \underline{15.0} & 28.5 & \underline{58.5} & 91.4 \\
    \cmidrule{2-10}
    & \textbf{T2T (Ours)} & 66.9 & \textbf{86.1} & \textbf{21.3} & \textbf{50.7} & \textbf{15.2} & \underline{28.8} & \textbf{59.0} & \underline{91.7} \\
    \midrule

    \multirow{3}{*}{\rotatebox{0}{Llama-3.2-3B-Instruct}} 
    & Initial Model & 24.9 & 32.5 & 12.6 & 13.3 & 0.0 & 0.0 & 15.4 & 22.5 \\
    & GRPO          & \underline{34.3} & \underline{45.0} & \underline{14.7} & \underline{20.9} & \underline{3.2} & \underline{5.3} & \underline{29.2} & \underline{43.7} \\
    \cmidrule{2-10}
    & \textbf{T2T (Ours)} & \textbf{36.2} & \textbf{47.6} & \textbf{15.5} & \textbf{25.4} & \textbf{3.6} & \textbf{6.6} & \textbf{32.0} & \textbf{49.1} \\
    \midrule

    \multirow{6}{*}{\rotatebox{0}{Qwen2.5-3B}} 
    & Initial Model & 40.6 & 49.1 & 7.1 & 14.7 & 3.4 & 4.6 & 43.9 & 59.0 \\
    & GRPO & 47.4 & 59.9 & 4.8 & 14.7 & \textbf{7.7} & 15.6 & 49.1 & 63.4 \\
    & LASER     & \underline{48.4} & 57.9 & 6.0 & 12.2 & 4.6 & 12.2 & 51.9 & 66.2 \\
    & W-REINFORCE    & 46.5 & 60.2 & \textbf{12.6} & \textbf{22.8} & \underline{5.8} & \underline{15.9} & 52.5 & 70.9 \\
    & EntroPIC & 44.6 & \underline{61.3} & 7.2 & 18.3 & 3.6 & 9.2 & \underline{54.7} & \underline{71.9} \\
    \cmidrule{2-10}
    & \textbf{T2T (Ours)} & \textbf{49.8} & \textbf{64.6} & \underline{9.6} & \underline{21.7} & 5.4 & \textbf{16.7} & \textbf{57.0} & \textbf{77.0}  \\
    \midrule

    \multirow{6}{*}{\rotatebox{0}{Qwen3-4B}} 
    & Initial Model & 80.2 & \textbf{91.2} & 22.2 & 37.0 & 14.7 & 35.4 & 71.4 & 92.4 \\
    & GRPO & \textbf{81.2} & 90.5 & \underline{29.3} & 48.6 & \underline{21.3} & 48.5 & 75.8 & 91.2 \\
    & LASER     & 80.6 & 90.7 & 28.2 & 45.4 & \underline{21.3} & 46.9 & \underline{76.1} & 91.1 \\
    & W-REINFORCE   & \underline{80.9} & 90.6 & 28.5 & \underline{50.7} & 20.8 & \textbf{50.9} & 75.8 & \underline{93.9} \\
    & EntroPIC & 80.5 & 90.8 & 28.8 & 47.9 & 21.0 & 48.0 & 75.9 & 91.5 \\
    \cmidrule{2-10}
    & \textbf{T2T (Ours)} & \textbf{81.2} & \underline{91.1} & \textbf{33.4} & \textbf{56.3} & \textbf{23.0} & \underline{49.6} & \textbf{78.0} & \textbf{95.3} \\
    \midrule
    \multirow{3}{*}{\rotatebox{0}{Qwen3-14B}} 
    & Initial Model       & 82.2 & 87.2 & 27.1 & 48.3 & 19.6 & 47.3 & 74.6 & 91.2  \\
    & GRPO                & \underline{83.3} & \underline{88.8} & \underline{29.1} & \underline{56.1} & \underline{22.4} & \underline{51.5} & \underline{80.7} & \underline{94.6} \\
    \cmidrule{2-10}
    & \textbf{T2T (Ours)} & \textbf{85.1} & \textbf{92.7} & \textbf{39.6} & \textbf{59.6} & \textbf{24.5} & \textbf{57.9} & \textbf{81.4} & \textbf{98.4}  \\
    \bottomrule
  \end{tabular}
  }
\end{table*}


The main results across four mathematical reasoning benchmarks are summarized in Table~\ref{tab:main_results}. The full experimental results are provided in Appendix~\ref{app:full_results}. Our experiments yield several key observations regarding the effectiveness and scalability of the proposed method:

\paragraph{Competitive Performance and Robust Improvements.}
As shown in Table~\ref{tab:main_results}, \textbf{T2T} demonstrates highly competitive performance across most tested model capacities.
Notably, on the Qwen2.5-3B and Qwen3-4B models, our method yields consistent gains, particularly excelling in the AMC'23 and AIME benchmarks compared to the standard GRPO baseline and other variants like LASER and W-REINFORCE. 
On the largest evaluated model, Qwen3-14B, T2T exhibits dominant performance, securing the top spot in all Pass@1 and Pass@64 metrics, which validates the method's strong scalability.

\paragraph{Effectiveness of RL Training.}
Comparing the results with the \textit{Initial Model}, it is evident that Reinforcement Learning significantly boosts mathematical reasoning capabilities. All RL-based methods show a substantial jump in performance compared to the base model. However, T2T generally maximizes this gain on models with sufficient capacity (3B+), suggesting that our approach utilizes the training data more efficiently by better distinguishing between high-quality and low-quality reasoning paths. Detailed analysis of T2T training dynamics are provided in Appendix~\ref{app:analysis}.


\subsection{Controlled Analysis of Reasoning Trajectory Length}

To isolate the effect of our T2T method on reasoning trajectory length, we conduct a controlled pairwise comparison between T2T and vanilla GRPO. Specifically, we restrict the analysis to samples that are solved correctly by both methods, eliminating confounding factors from differences in problem difficulty or correctness. We report the average reasoning length (in tokens) across five model families and four mathematical reasoning benchmarks, as shown in Table ~\ref{tab:control_length}.
\begin{table}[t]
\centering
\caption{Controlled analysis of reasoning trajectory lengths (token counts) on samples solved correctly by both GRPO and T2T.}
\label{tab:control_length}
\resizebox{0.8\linewidth}{!}{ 
\begin{tabular}{llcccc}
\toprule
Model & Method & MATH500 & AIME24 & AIME25 & AMC23 \\
\midrule
\multirow{2}{*}{DS-1.5B} & GRPO & 1453 & 3124 & 3510 & 2726 \\
& T2T  & 1417 & 2786 & 3348 & 2814 \\
\midrule
\multirow{2}{*}{Llama3.2-3B} & GRPO & 367 & 797 & 1483 & 651 \\
& T2T  & 338 & 653 & 1237 & 584 \\
\midrule
\multirow{2}{*}{Qwen2.5-3B} & GRPO & 1534 & 607 & 1521 & 1173 \\
& T2T  & 1416 & 574 & 1392 & 1068 \\
\midrule
\multirow{2}{*}{Qwen3-4B} & GRPO & 664 & 1718 & 1443 & 1732 \\
& T2T  & 612 & 1543 & 1316 & 1577 \\
\midrule
\multirow{2}{*}{Qwen3-14B} & GRPO & 687 & 1804 & 1496 & 1798 \\
& T2T  & 534 & 1327 & 1384 & 1435 \\
\bottomrule
\end{tabular}
}
\end{table}

The results reveal a clear and consistent pattern: across nearly all model scales and benchmarks, T2T produces shorter average reasoning trajectories than GRPO on samples correctly solved by both methods. For instance, on the challenging AIME24 benchmark, T2T reduces the average reasoning length of Qwen3-14B from 1804 tokens to 1327 tokens, and that of Llama3.2-3B from 797 tokens to 653 tokens. This controlled comparison directly validates the thinning phase of our T2T framework: while standard GRPO assigns uniform rewards to all correct outputs and lacks incentives for conciseness, T2T dynamically encourages compact reasoning for mastered problems. The model accordingly prunes redundant exploratory steps and unnecessary derivations to converge on efficient solution paths, which aligns with the human learning progression from verbose exploration to condensed knowledge consolidation. More analysis of trajectory length are provided in Appendix ~\ref{app:test_set_length} and Appendix ~\ref{app:length_dynamics}.

\subsection{Ablation Study}
\label{sec:ablation}

\begin{table*}[t]
  \caption{\textbf{Ablation Study on Qwen2.5-3B.} We investigate the impact of Difficulty Awareness, Thickening (exploration reward), and Thinning (efficiency penalty). \textbf{Pass@1} and \textbf{Pass@64} are reported. The best results are highlighted in \textbf{bold}.}
  \label{tab:ablation}
  \centering
  \resizebox{1.0\textwidth}{!}{
  \begin{tabular}{l|cc|cc|cc|cc}
    \toprule
    \multicolumn{1}{c|}{\multirow{2}{*}{\textbf{Method Variant}}} &
    \multicolumn{2}{c|}{\textbf{MATH-500}} &
    \multicolumn{2}{c|}{\textbf{AIME'24}} &
    \multicolumn{2}{c|}{\textbf{AIME'25}} &
    \multicolumn{2}{c}{\textbf{AMC'23}} \\
    \cmidrule(lr){2-3} \cmidrule(lr){4-5} \cmidrule(lr){6-7} \cmidrule(lr){8-9}
    & Pass@1 & Pass@64 & Pass@1 & Pass@64 & Pass@1 & Pass@64 & Pass@1 & Pass@64 \\
    \midrule
    
    w/o Difficulty Awareness & 49.5 & 64.4 & 8.9 & 20.5 & 4.9 & 15.8 & 56.2 & 75.8 \\
    
    w/o Thinning (Thickening Only) & 49.1 & 63.3 & 8.5 & 20.8 & 4.6 & 16.0 & 55.1 & 75.2 \\
    
    w/o Thickening (Thinning Only) & 48.3 & 60.1 & 8.2 & 18.5 & 4.2 & 13.9 & 55.8 & 72.5 \\
    
    \midrule
    
    \textbf{T2T (Full)} & \textbf{49.8} & \textbf{64.6} & \textbf{9.6} & \textbf{21.7} & \textbf{5.4} & \textbf{16.7} & \textbf{57.0} & \textbf{77.0} \\
    \bottomrule
  \end{tabular}
  }
\end{table*}

To explicitly evaluate the contribution of each component in our proposed method, we conduct an ablation study using the \texttt{Qwen2.5-3B} model. We compare the full \textbf{T2T} against three variants:
\begin{itemize}
    \item \textbf{w/o Difficulty Awareness:} We remove the group pass rate term from the reward formulation, applying a static length reward/penalty regardless of problem difficulty.
    \begin{equation}
    R_{\text{w/o Difficulty Awareness}}(q,o)=
    \begin{cases}
    1-\alpha\, s_L(o)\,, & \mathcal{V}(q,o)=1,\\[2pt]
    \alpha\, s_L(o)\,, & \mathcal{V}(q,o)=0.
    \end{cases}
    \label{eq:R_T2T_piecewise_w/o_diff_aware}
    \end{equation}
    \item \textbf{w/o Thinning (Thickening Only):} We retain difficulty awareness but only apply the positive reward for incorrect responses (encouraging exploration on hard tasks). The penalty for correct responses is removed.
    \item \textbf{w/o Thickening (Thinning Only):} We retain difficulty awareness but only apply the length penalty for correct responses (encouraging efficiency on easy tasks). The reward for incorrect responses is removed.
    \item \textbf{T2T (Full):} The complete method incorporating difficulty-aware Thickening and Thinning.
\end{itemize}

The results are summarized in Table~\ref{tab:ablation}.

    
    
    
    
    

\paragraph{Impact of Difficulty Awareness.}
Comparing the full model with the \textit{w/o Difficulty Awareness} variant, we observe a clear performance drop when the adaptive pass-rate scaling is removed. This confirms that treating all samples equally is suboptimal. A static length bias fails to distinguish between the need for exploration in hard problems and the need for conciseness in solved problems, leading to conflicting learning signals.

\paragraph{Impact of Thickening and Thinning Mechanics.}
As shown in the ablation results, removing either the ``Thickening'' or ``Thinning'' component leads to a general degradation in performance across most metrics, validating that both phases are essential for the full effectiveness of T2T. 
The \textit{w/o Thinning} variant (Thickening Only) retains the incentive for expansive exploration on difficult or incorrect problems. However, its performance drops indicate that without the subsequent ``Thinning'' phase—which encourages abstraction and compression on correct solutions—the model fails to refine its verbose reasoning into efficient knowledge, mirroring the human learning process where understanding must be condensed after expansion.
Conversely, the \textit{w/o Thickening} variant (Thinning Only) applies penalties for redundancy but lacks the reward mechanism to encourage longer chains when the model fails. The significant decline in performance here suggests that skipping the ``Thickening'' phase prevents the model from engaging in the deep exploration necessary to solve hard problems, effectively trying to ``thin'' the knowledge before it has been sufficiently ``thickened.'' 
These findings confirm our hypothesis based on Hua Luogeng's principle: mathematical reasoning capabilities are best developed by combining expansive exploration (Thickening) with disciplined abstraction (Thinning), rather than relying on either mechanism in isolation.

\section{Discussion}
\label{sec:discussion}

\paragraph{Positioning T2T within the RLVR algorithmic stack.}

A useful way to situate T2T is to view RLVR algorithms as comprising three loosely coupled layers: the \emph{reward} layer, which specifies $R(q,o)$; the \emph{advantage} layer, which estimates $\hat{A}$ from sampled rewards (e.g., GRPO's group-relative baseline, Dr.\,GRPO's debiased estimator, or a learned value critic); and the \emph{update} layer, which converts advantages into policy updates (e.g., PPO-style clipping, KL regularization, or W-REINFORCE-style reweighting). T2T modifies only the reward layer. The competence statistic $\hat{p}_\theta(q)$ is computed from on-policy rollouts and detached from the computation graph (Section~\ref{sec:method}), so the resulting reward enters the optimization pipeline as an ordinary scalar signal and does not interact with importance sampling, clipping, or KL terms. As a consequence, T2T is \emph{orthogonal} to RLVR variants that target the advantage or update layers: it can in principle be combined with frameworks like Dr.\,GRPO~\cite{DrGRPO}, GSPO~\cite{zheng2025group}, DAPO~\cite{dapo-math-17k}, or off-policy extensions such as Luffy~\cite{Luffy}, without algorithmic conflict and without additional computational overhead.

\paragraph{Beyond RLVR: thickening-to-thinning as an implicit curriculum.}
More broadly, the thickening-to-thinning pattern can be viewed as an implicit, reward-driven curriculum: rather than scheduling task difficulty externally or annealing exploration coefficients by hand, the model's own competence on each query determines how much exploratory budget it receives. We see this as a small step toward training procedures whose exploration--exploitation trade-off adapts continuously to the learner's evolving state, and a direction worth investigating in settings beyond verifiable mathematical reasoning.

\section{Conclusion}
This work introduces Thickening-to-Thinning (T2T), a competence and length conditioned reward shaping strategy that explicitly distinguishes between exploratory reasoning and efficient solution refinement within RLVR.
Importantly, T2T is lightweight to implement, requires no additional models or inference-time overhead, and can be readily combined with algorithmic extensions that modify optimization or sampling procedures rather than reward semantics.
Extensive experiments demonstrate that T2T not only improves final performance and generalization across diverse benchmarks, but also induces healthier training dynamics, mitigating common RLVR pathologies such as brittle exploration or excessive verbosity.
More broadly, our results suggest that mimicking key aspects of human learning—progressing from expansive exploration to deliberate compression—may offer a fruitful direction for advancing reasoning in large language models beyond static reward formulations.

\newpage
\bibliography{neurips_2026}
\bibliographystyle{plainnat}


\appendix

\section{Proof of Theorem 4.1}
\label{app:decomposition_proof}

For brevity, throughout this appendix we write
\[
\mathbb{E}[\cdot]\;:=\;\mathbb{E}_{o\sim \pi_\theta(\cdot\mid q)}[\cdot],
\qquad
g_\theta(q,o)\;:=\;\nabla_\theta\log\pi_\theta(o\mid q).
\]
Recall that in implementation $\hat p$ is detached from the computation graph,
so the policy gradient flows only through $\log\pi_\theta(o\mid q)$. The
quantity we analyze, $\mathbb{E}[R_{\mathrm{T2T}}\, g_\theta]$, is the
population-level counterpart of this update.

\begin{proof}
The proof proceeds in three steps: (i) expand $\nabla_\theta \tilde J_q$ via
the score-function identity, (ii) evaluate $\nabla_\theta p_\theta(q)$, and
(iii) compare with $\mathbb{E}[R_{\mathrm{T2T}}\, g_\theta]$.

\smallskip
\noindent\textbf{Step 1: Score-function expansion of $\nabla_\theta \tilde J_q$.}
Since $\tilde R(q,o,\theta)$ depends on $\theta$ both through the sampling
distribution $\pi_\theta(\cdot\mid q)$ and explicitly through $p_\theta(q)$,
differentiating $\tilde J_q(\theta) = \mathbb{E}[\tilde R]$ yields
\begin{equation}
\nabla_\theta \tilde J_q(\theta)
\;=\;
\underbrace{\mathbb{E}\!\left[\nabla_\theta \tilde R\right]}_{\text{explicit dependence}}
\;+\;
\underbrace{\mathbb{E}\!\left[\tilde R\, g_\theta\right]}_{\text{score-function term}}.
\label{eq:proof_step1}
\end{equation}
Among the components of $\tilde R = \mathcal{V}(q,o) + \alpha\, s_L(o)\,(1-p_\theta(q))$, only $p_\theta(q)$ depends explicitly on $\theta$, so
\begin{equation}
\nabla_\theta \tilde R(q,o,\theta)
\;=\;
-\alpha\, s_L(o)\,\nabla_\theta p_\theta(q).
\label{eq:proof_explicit_grad}
\end{equation}

\smallskip
\noindent\textbf{Step 2: Evaluating $\nabla_\theta p_\theta(q)$.}
Applying the score-function identity to
$p_\theta(q) = \mathbb{E}[\mathcal{V}(q,o)]$,
\begin{equation}
\nabla_\theta p_\theta(q)
\;=\;
\mathbb{E}\!\left[\mathcal{V}(q,o)\, g_\theta(q,o)\right].
\label{eq:proof_grad_p}
\end{equation}
Substituting Eqs.~\eqref{eq:proof_explicit_grad}--\eqref{eq:proof_grad_p} into
Eq.~\eqref{eq:proof_step1} and using the fact that $\mathbb{E}[s_L(o)]$ is a
constant with respect to $o$,
\begin{align}
\nabla_\theta \tilde J_q(\theta)
&\;=\;
\mathbb{E}[\tilde R\, g_\theta]
\;-\;
\alpha\, \mathbb{E}[s_L(o)]\,\mathbb{E}[\mathcal{V}(q,o)\, g_\theta]
\nonumber\\
&\;=\;
\mathbb{E}\!\left[\bigl(\tilde R-\alpha\,\mathbb{E}[s_L(o)]\,\mathcal{V}(q,o)\bigr)\, g_\theta\right].
\label{eq:proof_Jtilde}
\end{align}

\smallskip
\noindent\textbf{Step 3: Comparing with $\mathbb{E}[R_{\mathrm{T2T}}\, g_\theta]$.}
By the identity $R_{\mathrm{T2T}} = \tilde R - \alpha\, s_L(o)\,\mathcal{V}(q,o)$ established in the main text,
\begin{equation}
\mathbb{E}[R_{\mathrm{T2T}}\, g_\theta]
\;=\;
\mathbb{E}\!\left[\bigl(\tilde R-\alpha\, s_L(o)\,\mathcal{V}(q,o)\bigr)\, g_\theta\right].
\label{eq:proof_RT2T}
\end{equation}
Subtracting Eq.~\eqref{eq:proof_Jtilde} from Eq.~\eqref{eq:proof_RT2T},
\[
\mathbb{E}[R_{\mathrm{T2T}}\, g_\theta]
\;-\;
\nabla_\theta \tilde J_q(\theta)
\;=\;
\alpha\,\mathbb{E}\!\left[\bigl(\mathbb{E}[s_L(o)]-s_L(o)\bigr)\,\mathcal{V}(q,o)\, g_\theta\right],
\]
which rearranges to
\[
\mathbb{E}[R_{\mathrm{T2T}}\, g_\theta]
\;=\;
\nabla_\theta \tilde J_q(\theta)
\;+\;
\alpha\,\mathbb{E}\!\left[\bigl(\mathbb{E}[s_L(o)]-s_L(o)\bigr)\,\mathcal{V}(q,o)\, g_\theta(q,o)\right].
\]
\end{proof}

\section{Implementation Details}
\label{app:implementation_details}
\subsection{Hyperparameters}
\label{app:hyperparameters}

We provide the detailed hyperparameter configurations used in our experiments in Table~\ref{tab:hyperparameters}. All models are trained using the \textbf{Verl}~\cite{verl} framework with the settings specified below.

\begin{table}[h]
    \centering
    \caption{Hyperparameter settings for T2T.}
    \label{tab:hyperparameters}
    \vspace{5pt}
    \begin{tabular}{l|c}
        \toprule
        \textbf{Hyperparameter} & \textbf{Value} \\
        \midrule
        Learning Rate & 1e-6 \\
        Group Size & 8 \\
        Train Batch Size & 256 \\
        PPO Mini-Batch Size & 64 \\
        Max Response Length & 8192 \\
        Max Prompt Length & 1024 \\
        Rollout Temperature & 1.0 \\
        Evaluation Temperature & 0.7 \\
        Evaluation Top-$p$ & 0.95 \\
        Clip Ratio High & 0.2 \\
        Clip Ratio Low & 0.2 \\
        Total Training Steps & 275 \\
        \midrule
        \multicolumn{2}{l}{\textit{T2T Reward Parameters}} \\
        Scaling Factor ($\alpha$) & 0.2 \\
        Length Norm Max ($L_{\max}$) & 4096 \\
        Length Norm Min ($L_{\min}$) & 0 \\
        \bottomrule
    \end{tabular}
\end{table}

\subsection{Baseline Implementation Details}
\label{app:baseline_settings}

To ensure a fair comparison and reproducibility, we strictly follow the hyperparameter recommendations provided in the respective original publications for all baseline methods. The specific configurations are detailed below:

\paragraph{GRPO.}
We utilize the standard implementation of Group Relative Policy Optimization~\cite{DeepSeekR1} without any additional reward shaping or gradient modification terms. 

\paragraph{LASER.}
Following \citet{LASER}, we configure the length-penalty reward shaping to encourage conciseness in correct responses. We set the penalty coefficient and the target length threshold as follows:
\begin{itemize}
    \item \textbf{Penalty Coefficient ($\alpha$):} $0.2$
    \item \textbf{Target Length ($L_{\text{target}}$):} $4096$
\end{itemize}
This target length aligns with the maximum response length used in our main experiments.

\paragraph{W-REINFORCE.}
We adopt the \textbf{W-REINFORCE} baseline as recommended by \citet{NSR}. This method scales down the gradients of positive samples while retaining full gradients for negative samples to encourage exploration.
\begin{itemize}
    \item \textbf{Positive Sample Weight ($\rho$):} $0.1$
\end{itemize}

\paragraph{EntroPIC.}
For the entropy-stabilization method EntroPIC~\cite{EntroPIC}, we employ the Proportional-Integral (PI) control mechanism to dynamically adjust loss weights. The specific control parameters are set as:
\begin{itemize}
    \item \textbf{Target Entropy ($H_{\text{target}}$):} $0.1$
    \item \textbf{Proportional Gain ($K_p$):} $1.0$
    \item \textbf{Integral Gain ($K_i$):} $0.01$
\end{itemize}
These values correspond to the configuration for standard mathematical reasoning tasks as described in the original paper.

\section{Generalization Capabilities}
\label{app:generalization}

A critical concern in domain-specific reinforcement learning is the potential risk of overfitting to the target domain (mathematics), which may degrade the model's performance on other tasks—a phenomenon often referred to as the ``alignment tax.'' To evaluate the robustness and transferability of T2T, we conduct zero-shot evaluations on two out-of-domain benchmarks: \textbf{BBH} (Big Bench Hard)~\cite{suzgun2023challenging} for general logical reasoning and \textbf{HumanEval}~\cite{chen2021evaluating} for code generation.

We evaluate the generalization capabilities of the \textit{Initial Model}, standard GRPO, and other baseline methods (LASER, W-REINFORCE, EntroPIC) on Qwen2.5-3B and Qwen3-4B. The results are summarized in Table~\ref{tab:generalization}.

\begin{table*}[h]
  \caption{\textbf{Generalization Performance on Out-of-Domain Tasks.} We report the \textbf{average 3-shot Exact Match (EM)} across all 27 subtasks of BBH (Reasoning) and \textbf{Pass@k} ($k \in \{1, 5, 10\}$) for HumanEval (Coding). \textbf{Detailed breakdowns of performance on each BBH subtask are provided in Appendix~\ref{app:bbh_full_results}.}}
  \label{tab:generalization}
  \centering
  \resizebox{0.85\textwidth}{!}{
  \begin{tabular}{c|l|c|ccc}
    \toprule
    \multicolumn{1}{c|}{\multirow{2}{*}{\textbf{Base Model}}} & 
    \multicolumn{1}{c|}{\multirow{2}{*}{\textbf{Method}}} & 
    \textbf{BBH (Reasoning)} & 
    \multicolumn{3}{c}{\textbf{HumanEval (Coding)}} \\
    \cmidrule(lr){3-3} \cmidrule(lr){4-6}
    \multicolumn{1}{c|}{} & \multicolumn{1}{c|}{} & 3-shot EM & Pass@1 & Pass@5 & Pass@10 \\
    \midrule
    
    \multirow{6}{*}{Qwen2.5-3B} 
    & Initial Model          & 55.3 & 61.5 & 78.2 & 84.5 \\
    & GRPO & 55.9 & 64.2 & 80.5 & 86.0 \\
    & LASER    & 56.0 & 64.5 & 80.8 & 86.4 \\
    & W-REINFORCE        & 55.9 & 64.1 & 80.4 & 86.1 \\
    & EntroPIC & 55.7 & 63.8 & 80.0 & 85.8 \\
    \cmidrule{2-6}
    & \textbf{T2T (Ours)}    & 55.8 & 64.2 & 80.5 & 86.2 \\
    \midrule
    
    \multirow{6}{*}{Qwen3-4B} 
    & Initial Model          & 79.3 & 72.0 & 86.5 & 91.0 \\
    & GRPO & 80.8 & 75.2 & 88.5 & 92.8 \\
    & LASER    & 80.7 & 75.5 & 88.8 & 93.0 \\
    & W-REINFORCE          & 80.7 & 75.1 & 88.4 & 92.6 \\
    & EntroPIC & 80.0 & 74.8 & 88.0 & 92.2 \\
    \cmidrule{2-6}
    & \textbf{T2T (Ours)}    & 80.6 & 75.7 & 89.0 & 93.5 \\
    \midrule
  \end{tabular}
  }
\end{table*}

\paragraph{Generalization to Broader Reasoning Tasks.}
We observe that reinforcement learning demonstrates robust generalization capabilities beyond mathematical domains. As shown in Table~\ref{tab:generalization}, all evaluated RL methods maintain or slightly improve upon the Initial Model's performance on out-of-domain benchmarks, such as BBH (general reasoning) and HumanEval (coding). This suggests that the reasoning patterns acquired during mathematical training effectively transfer to broader logical and algorithmic tasks. T2T aligns with this trend, confirming that our method enhances domain-specific performance while preserving general reasoning abilities.

\section{Analysis of Training Dynamics}
\label{app:analysis}

To investigate the learning behavior of T2T, we visualize the evolution of Training Accuracy and Policy Entropy during the fine-tuning process. Figure~\ref{fig:acc_dynamics} and figure~\ref{fig:entropy_dynamics} presents the comparative results on \texttt{Qwen2.5-3B} and \texttt{Qwen3-4B}.

\begin{figure*}[t]
  \centering
  \begin{minipage}{0.49\textwidth}
    \centering
    \includegraphics[width=\linewidth]{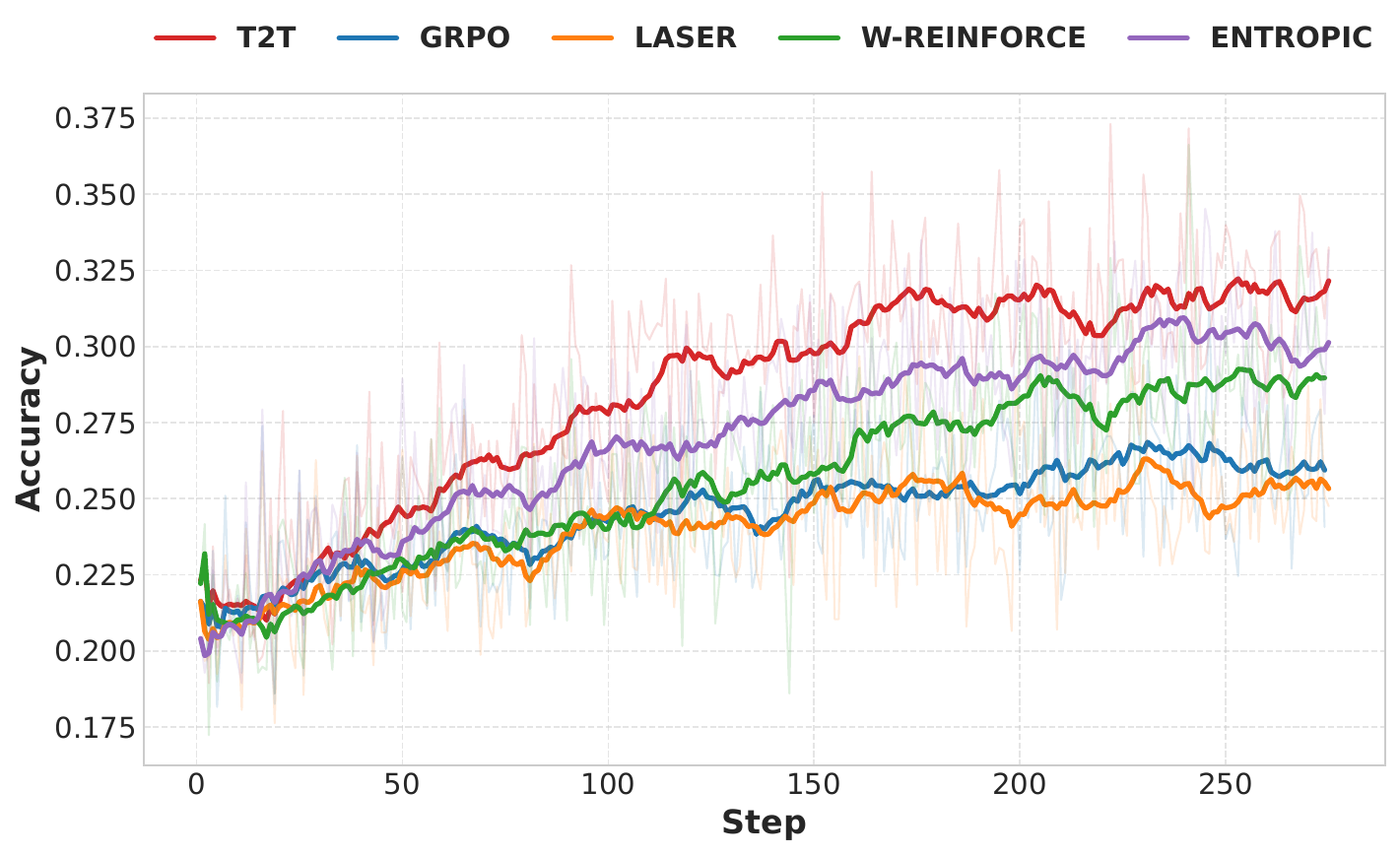} 
  \end{minipage}
  \hfill
  \begin{minipage}{0.49\textwidth}
    \centering
    \includegraphics[width=\linewidth]{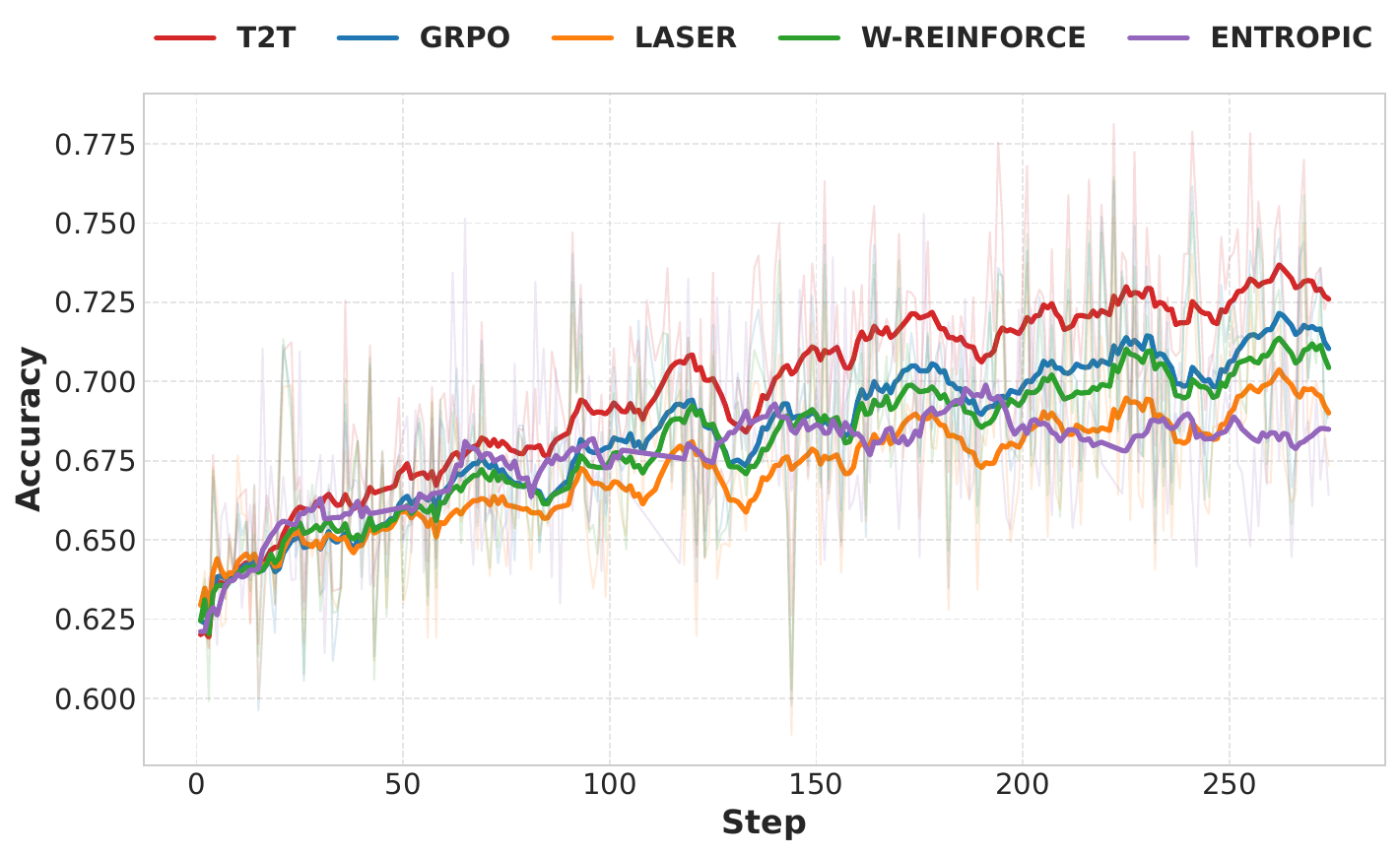} 
  \end{minipage}

  \caption{\textbf{Training Accuracy Evolution.} The plot on the left corresponds to \texttt{Qwen2.5-3B}, and the plot on the right corresponds to \texttt{Qwen3-4B}. Across both model scales, T2T demonstrates superior learning efficiency compared to the baseline.}
  \label{fig:acc_dynamics}
\end{figure*}

\begin{figure*}[t]
  \centering
  \begin{minipage}{0.49\textwidth}
    \centering
    \includegraphics[width=\linewidth]{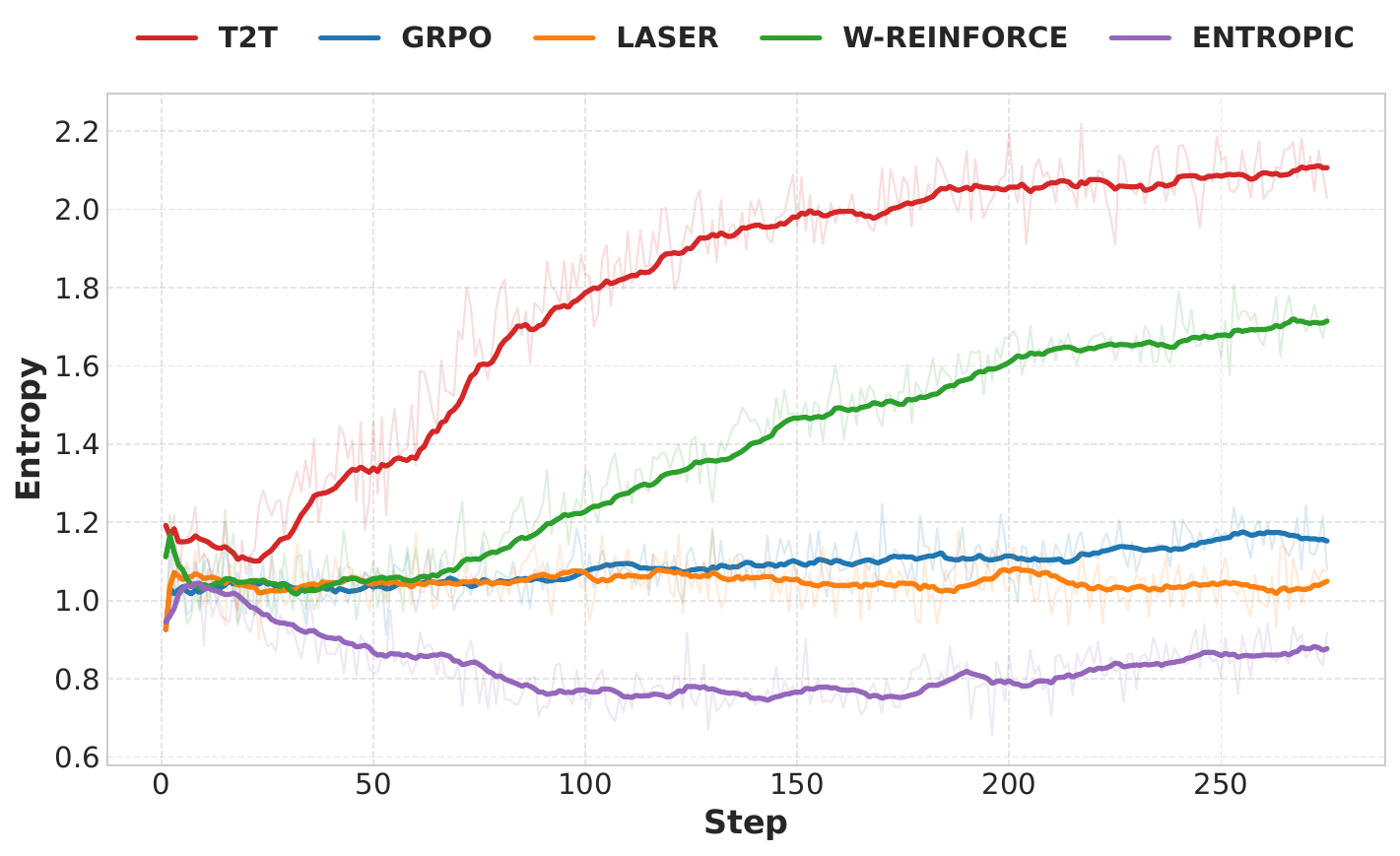} 
  \end{minipage}
  \hfill
  \begin{minipage}{0.49\textwidth}
    \centering
    \includegraphics[width=\linewidth]{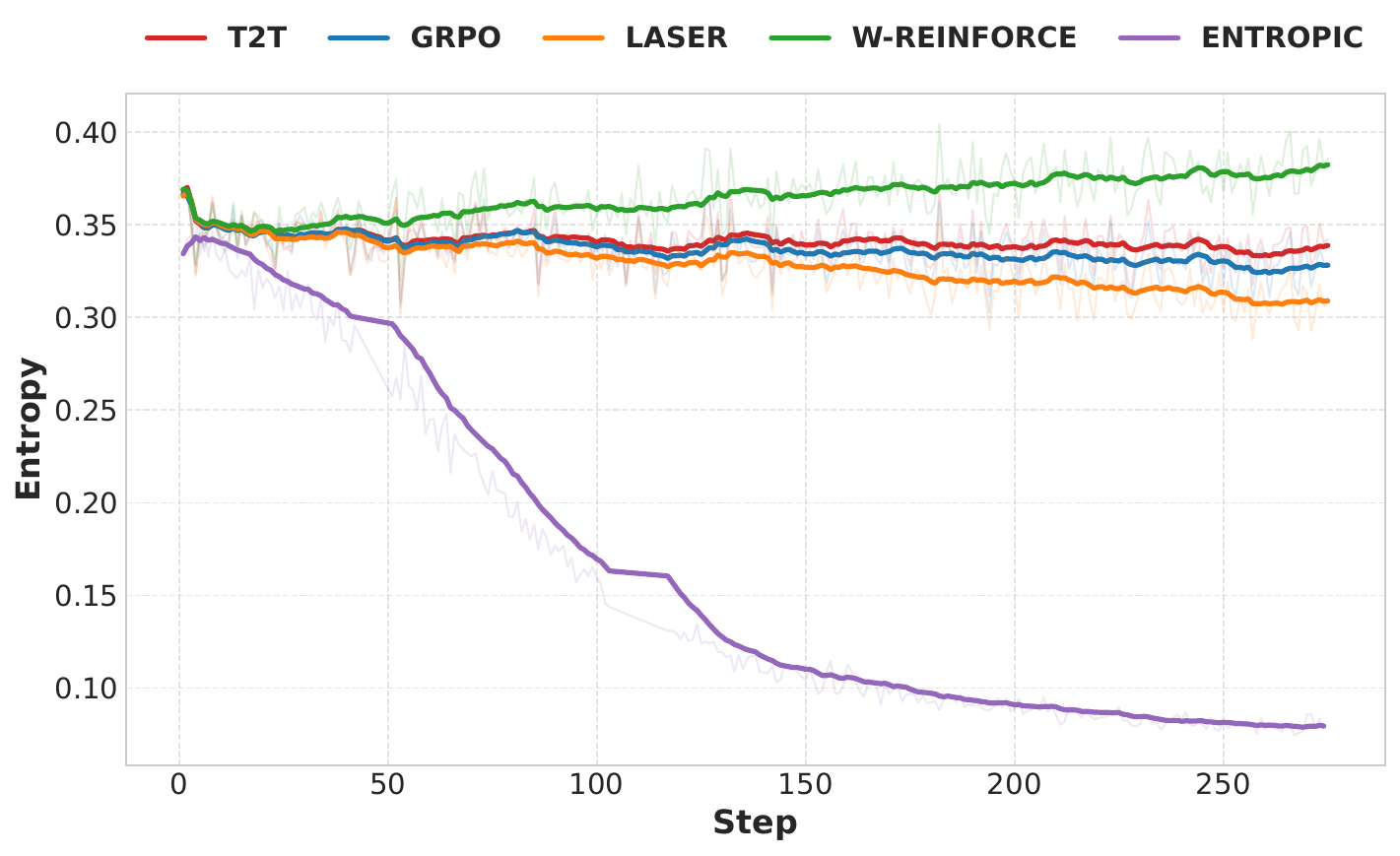} 
  \end{minipage}
  
  \caption{\textbf{Policy Entropy Evolution.} The plot on the left corresponds to \texttt{Qwen2.5-3B}, and the plot on the right corresponds to \texttt{Qwen3-4B}. Regardless of the absolute trend, T2T consistently maintains a higher relative entropy level than the baseline, indicating sustained exploration capabilities.}
  \label{fig:entropy_dynamics}
\end{figure*}

  

\paragraph{Emergent High Entropy without Explicit Regularization.}
A critical observation from training dynamics is the behavior of policy entropy, which varies across model capacities yet consistently favors our method. It is important to emphasize that \textbf{T2T does not employ any explicit entropy regularization terms}.

On \textbf{Qwen2.5-3B} , the standard GRPO baseline exhibits a relatively flat and static entropy curve, suggesting limited variation in its search strategy. In contrast, T2T demonstrates a distinct dynamic: it exhibits a \textbf{steady increase in entropy} that \textbf{eventually converges} to a stable level. This trajectory aligns with our ``Thickening-to-Thinning'' philosophy: the model first expands its search space to explore diverse reasoning paths (Thickening), and then stabilizes as it learns to distinguish correct solutions (Thinning).

On \textbf{Qwen3-4B} , where the baseline shows a tendency towards entropy decay (signaling potential policy collapse), T2T effectively \textbf{counters this drop}, sustaining a significantly higher entropy level throughout training. 

Despite these different underlying dynamics, the common thread is that T2T implicitly incentivizes the model to maintain a broader search space than the baseline. This emergent property directly supports the superior Pass@64 performance observed in our main results.

\paragraph{Accelerated Learning Efficiency.}
Figure~\ref{fig:acc_dynamics} demonstrate that T2T achieves a significantly increase in training accuracy compared to the baseline during training. This rapid improvement stems from the \textbf{synergy of the Thickening-to-Thinning (T2T) strategy}. 
On one hand, the ``Thickening'' phase encourages the model to explore diverse trajectories for hard problems, increasing the likelihood of discovering correct reasoning chains that the baseline might miss. 
On the other hand, once a correct path is found, the ``Thinning'' phase applies a dynamic length penalty, filtering out verbose and irrelevant tokens. This dual mechanism ensures that the model not only finds the correct answers faster but also learns to refine them efficiently, leading to a consistently higher accuracy yield within the same training budget.


\paragraph{Length Modulation Mechanics.}
We further analyze the evolution of response lengths. We observe that T2T adaptively modulates generation length based on intrinsic \textbf{model capability}. Specifically, the method tends to increase length for less capable models (to encourage exploration of reasoning paths), while decreasing it for highly proficient models that have already mastered the tasks (to encourage conciseness). Moreover, a breakdown of positive and negative samples confirms that our method successfully learns a bi-modal strategy: shortening correct responses while elongating incorrect ones. \textbf{Detailed plots and analyses of these length dynamics are provided in Appendix~\ref{app:length_dynamics}.}

\section{Detailed Analysis of Response Length in Training Dynamics}
\label{app:length_dynamics}

In this section, we provide a granular analysis of how T2T modulates the response length across different model scales and correctness outcomes.

\subsection{Adaptive Length Modulation across Model Scales}
\begin{figure*}[t]
  \centering
  \begin{minipage}{0.49\textwidth}
    \centering
    \includegraphics[width=\linewidth]{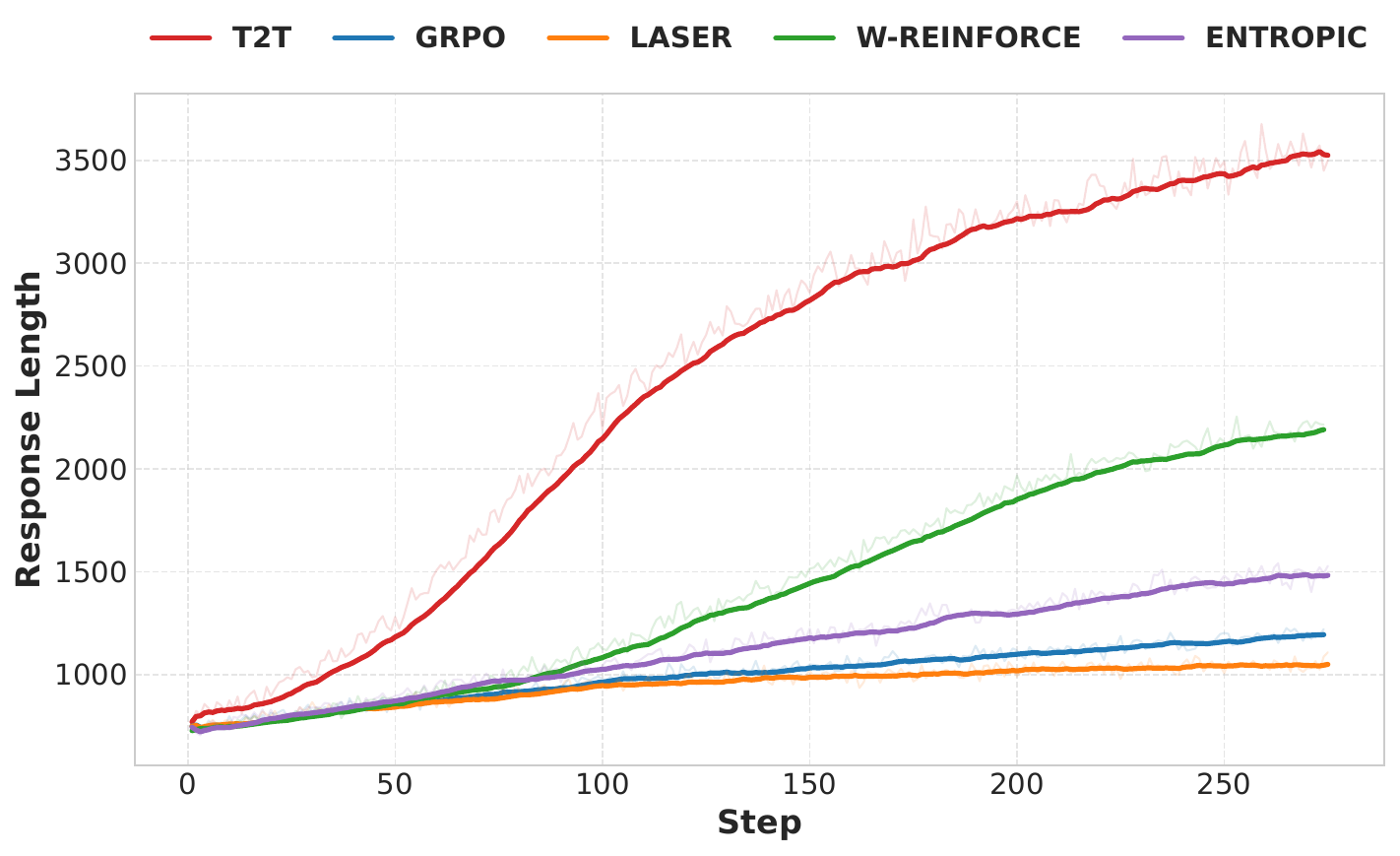} 
  \end{minipage}
  \hfill
  \begin{minipage}{0.49\textwidth}
    \centering
    \includegraphics[width=\linewidth]{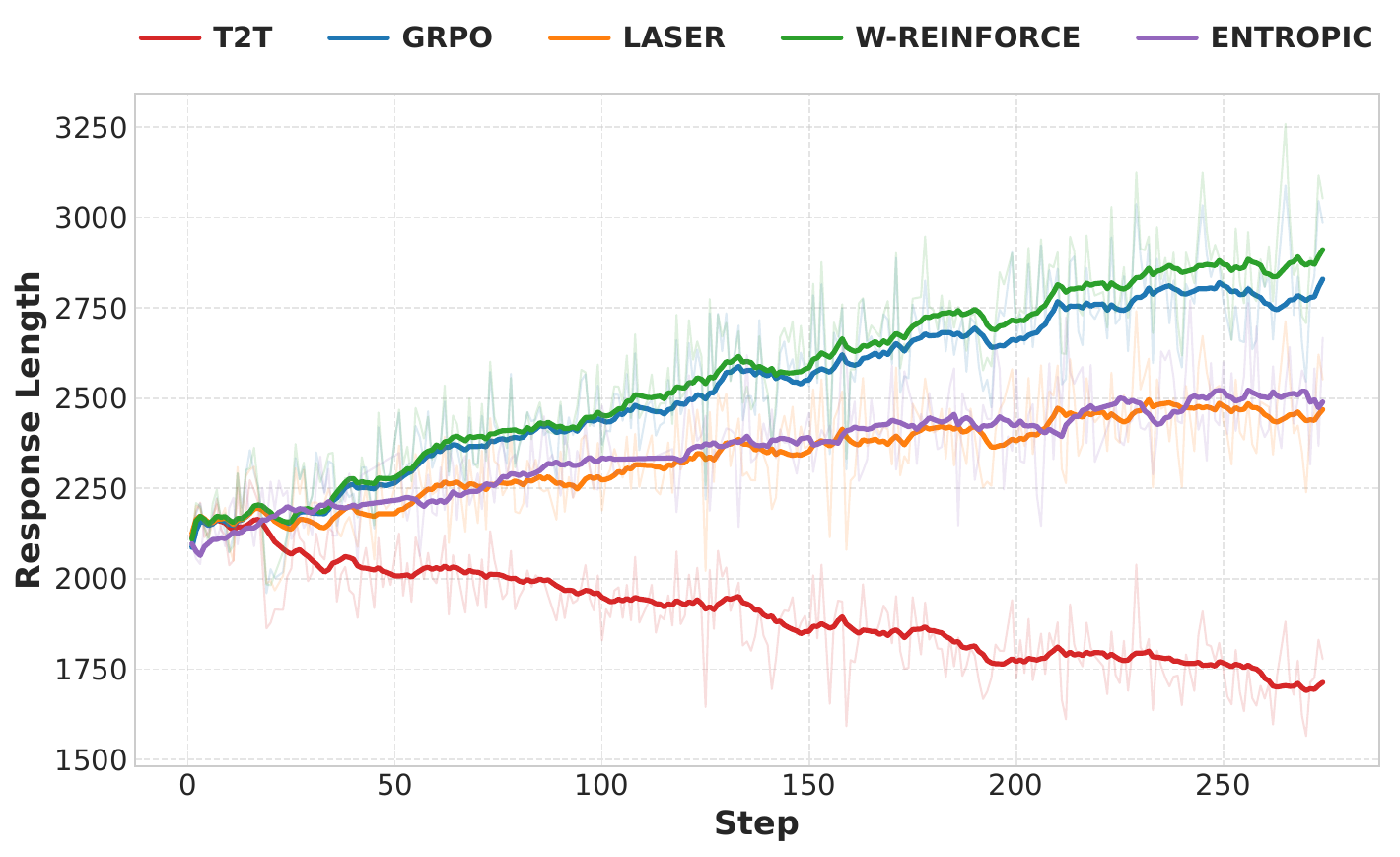} 
  \end{minipage}
  
  \caption{\textbf{Response Length Evolution.} The plot on the left corresponds to \texttt{Qwen2.5-3B}, and the plot on the right corresponds to \texttt{Qwen3-4B}. }
  \label{fig:response_length_dynamics}
\end{figure*}

Figure~\ref{fig:response_length_dynamics} illustrates the evolution of average response length during training. A striking divergence is observed between the two model scales, which strongly validates the adaptive nature of our T2T mechanism.

On \textbf{Qwen2.5-3B}, the response length for T2T increases relative to the initial stage. This suggests that for the smaller model, the training queries pose a higher level of intrinsic difficulty. Consequently, the ``Thickening'' incentive dominates the reward signal, encouraging the model to generate longer chains of thought to bridge reasoning gaps. This aligns with the entropy increase observed in Figure~\ref{fig:entropy_dynamics}, confirming that the model is actively exploring via expansion.
Conversely, on \textbf{Qwen3-4B}, the T2T response length exhibits a downward trend. Since the 4B model possesses stronger intrinsic reasoning capabilities, it achieves high success rates earlier in training. As the model gains confidence, the ``Thinning'' penalty takes precedence, incentivizing the pruning of redundant tokens to converge on more concise solutions.
This contrast highlights a key advantage of T2T: it does not enforce a static length bias (e.g., always longer or always shorter). Instead, it acts as a \textbf{competence-aware regulator}, dynamically allocating the computational budget based on the model's mastery of the task.

\subsection{Disentangling Correct and Incorrect Response Dynamics}

To rigorously verify that the observed length changes are causally linked to response correctness—as dictated by our T2T reward formulation—we decompose the response length evolution into positive (correct) and negative (incorrect) samples. Figure~\ref{fig:length_dynamics} presents this breakdown for the \texttt{Qwen3-14B} model.


\begin{figure*}[t]
  \centering
  
  \begin{minipage}{0.49\textwidth}
    \centering
    \includegraphics[width=\linewidth]{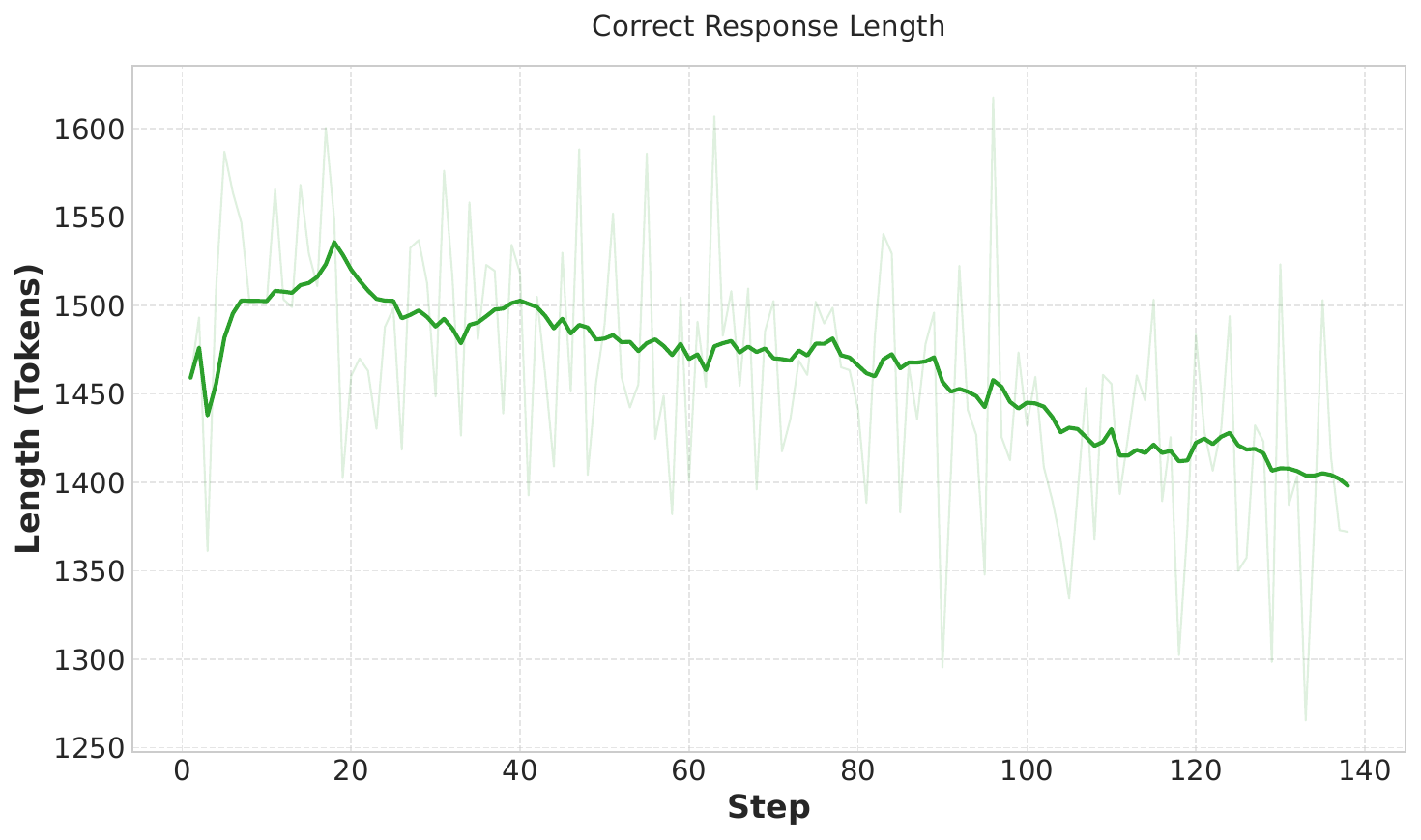}
    \caption*{(a) Correct Responses (Positive)}
  \end{minipage}
  \hfill
  \begin{minipage}{0.49\textwidth}
    \centering
    \includegraphics[width=\linewidth]{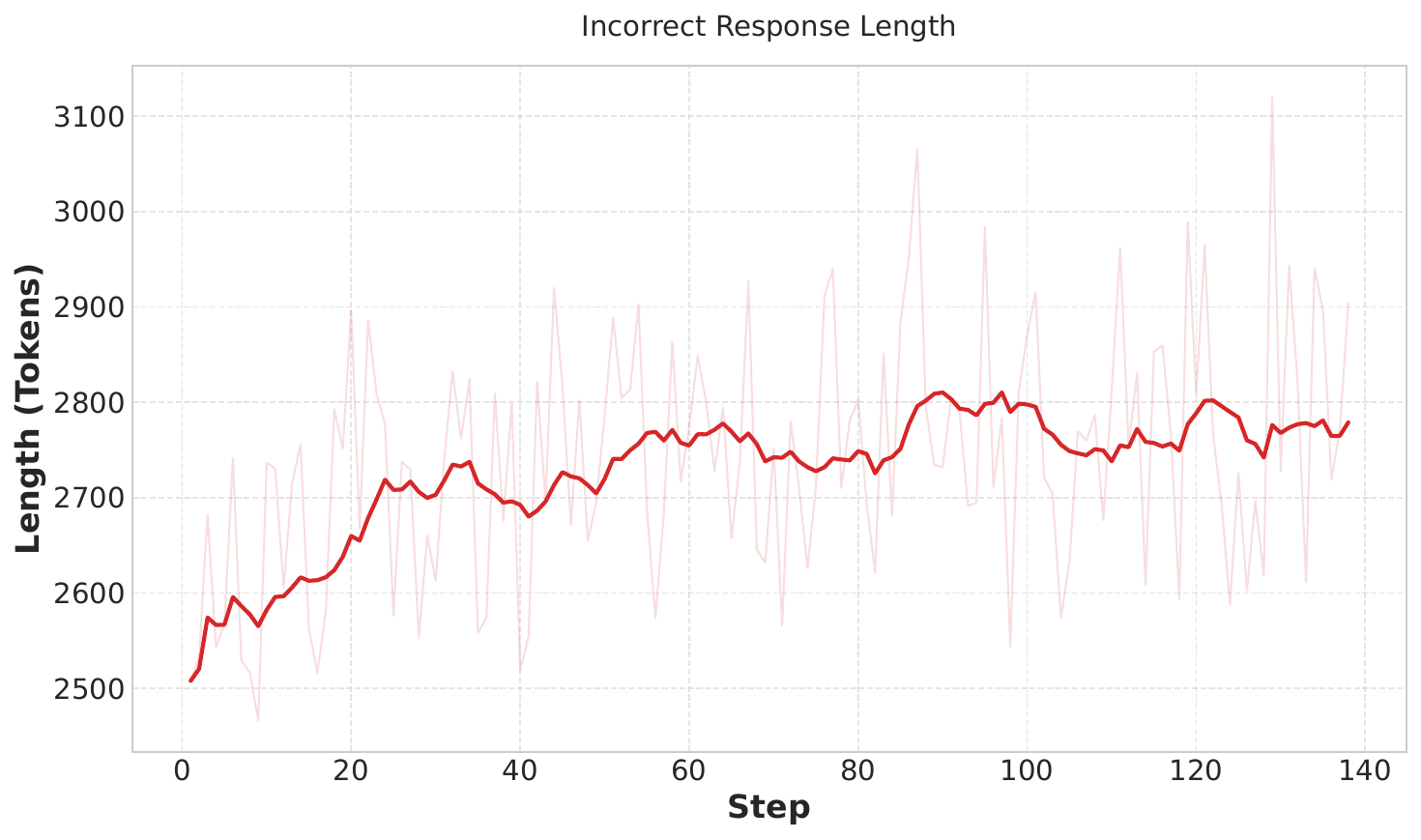}
    \caption*{(b) Incorrect Responses (Negative)}
  \end{minipage}
  
  \caption{\textbf{Divergent Length Dynamics based on Correctness (Qwen3-14B).} Analysis of token length evolution for (a) Correct and (b) Incorrect samples. Correct responses systematically shorten (Thinning) while incorrect responses lengthen (Thickening), confirming the bi-modal optimization strategy of T2T.}
  \label{fig:length_dynamics}
\end{figure*}

As shown in Figure~\ref{fig:length_dynamics}, the model exhibits a striking divergence in behavior depending on the outcome:

\begin{itemize}
    \item \textbf{Thinning for Correctness:} The length of \textbf{positive samples} shows a consistent downward trend. This confirms that when the model successfully solves a problem, the adaptive length penalty effectively encourages the policy to converge towards the most efficient reasoning path.
    \item \textbf{Thickening for Failure:} Conversely, the length of \textbf{negative samples} increases over time. This indicates that when the model fails to reach the correct answer, the reward formulation incentivizes it to generate longer sequences,forcing the model to explore alternative reasoning steps.
\end{itemize}

\paragraph{Impact of Transitioning Samples.}
It is worth noting that the divergence shown in Figure~\ref{fig:length_dynamics} likely \textit{underestimates} the true strength of the T2T mechanism due to the dynamic nature of training. As the model improves (see accuracy curves), many difficult problems that initially resulted in long, incorrect responses eventually transition into the "correct" category. 
Since these newly solved problems often require complex (and thus longer) reasoning chains compared to easy problems, their migration into the positive pool partially offsets the length reduction of the originally correct samples. Similarly, their removal from the negative pool dampens the average length increase of the remaining incorrect samples. 
Therefore, the intrinsic drive of T2T to shorten correct paths and lengthen incorrect exploration is likely even stronger than what the aggregated statistics suggest.

This clear separation of dynamics provides strong empirical evidence that T2T creates a \textbf{bi-modal strategy}: efficient execution for known knowledge, and extensive exploration for unknown territories.

\section{Full Test Set Length Report}
\label{app:test_set_length}
We report the average reasoning trajectory lengths across the full test set (including both correct and incorrect samples) for T2T and vanilla GRPO in Table~\ref{tab:appendix_length_full}.

\begin{table}[h]
\centering
\caption{Average reasoning trajectory lengths (tokens) over the \textbf{full test set} for GRPO and T2T.}
\label{tab:appendix_length_full}
\resizebox{0.8\linewidth}{!}{
\begin{tabular}{llcccc}
\toprule
Model & Method & MATH500 & AIME24 & AIME25 & AMC23 \\
\midrule
\multirow{2}{*}{DS-1.5B} 
& GRPO & 3010 & 3674 & 4125 & 4289 \\
& T2T  & 2842 & 3956 & 4412 & 4573 \\
\midrule
\multirow{2}{*}{Llama3.2-3B} 
& GRPO & 1009 & 2709 & 2088 & 2247 \\
& T2T  & 1146 & 2984 & 2357 & 2110 \\
\midrule
\multirow{2}{*}{Qwen2.5-3B} 
& GRPO & 1764 & 2386 & 2917 & 3164 \\
& T2T  & 1984 & 2612 & 3245 & 3418 \\
\midrule
\multirow{2}{*}{Qwen3-4B} 
& GRPO & 1269 & 4824 & 4855 & 2251 \\
& T2T  & 1124 & 4156 & 4287 & 2034 \\
\midrule
\multirow{2}{*}{Qwen3-14B} 
& GRPO & 1384 & 4142 & 4370 & 2418 \\
& T2T  & 1217 & 3784 & 4276 & 2156 \\
\bottomrule
\end{tabular}
}
\end{table}

As discussed, T2T’s length control is not a static compression objective but a dynamic mechanism aligned with the model’s mastery state: for problems the model struggles with, T2T encourages expansive exploration, leading to longer reasoning trajectories; for problems the model has mastered, it incentivizes concise reasoning, leading to shorter trajectories. Consequently, the average length over an entire dataset depends directly on the distribution of problem difficulty and the model’s overall mastery of the test set at evaluation time.

T2T’s length modulation is adaptive and context-dependent, rather than a one-size-fits-all compression strategy. We would like to clarify again that length control is not our major objective but an instrument to implement T2T dynamics. Our core objective is to fundamentally enhance the model's reasoning capabilities by simulating human learning dynamics. 

\section{Hyperparameter Sensitivity Analysis}
\label{app:alpha_sensitivity}

A practical reinforcement learning algorithm should be robust to hyperparameter choices and not require exhaustive tuning. 
To verify the sensitivity of T2T to the length scaling factor $\alpha$, we conducted experiments on the \texttt{Qwen-14B} model comparing two distinct settings: $\alpha=0.1$ and $\alpha=0.2$.

The parameter $\alpha$ controls the magnitude of the length incentive relative to the correctness reward. As derived in Section ~\ref{sec:method}, the theoretical constraint $\alpha < 0.5$ ensures that any correct response yields a higher reward than any incorrect response, regardless of length. Within this valid range, $\alpha$ merely adjusts the "strength" of the shaping signal rather than its direction.

The results are summarized in Table~\ref{tab:alpha_sensitivity}. We observe that the performance differences between $\alpha=0.1$ and $\alpha=0.2$ are marginal across benchmarks. 
For instance, the Pass@1 score on MATH-500 fluctuates by less than 1\%, and the exploration capability (Pass@64) remains consistently high. 
This indicates that our method is \textbf{robust to the selection of $\alpha$}: as long as the parameter is set within a reasonable range (sufficiently large to provide a gradient signal but satisfying $\alpha < 0.5$), the T2T mechanism effectively modulates the reasoning length without requiring precise tuning. 

\begin{table*}[h]
    \centering
    \caption{\textbf{Sensitivity Analysis of $\alpha$ on Qwen3-14B.} We compare the performance with $\alpha=0.1$ and $\alpha=0.2$. The method demonstrates high stability, showing that precise tuning of the scaling factor is not critical for success.}
    \label{tab:alpha_sensitivity}
    \vspace{5pt}
     \resizebox{0.9\textwidth}{!}{%
    \begin{tabular}{l|cc|cc|cc|cc}
        \toprule
        \multirow{2}{*}{\textbf{Configuration}} & \multicolumn{2}{c|}{\textbf{MATH-500}} & \multicolumn{2}{c}{\textbf{AIME'24}} & \multicolumn{2}{c}{\textbf{AIME'25}}& \multicolumn{2}{c}{\textbf{AMC'23}}\\
        & Pass@1 & Pass@64 & Pass@1 & Pass@64 & Pass@1 & Pass@64 & Pass@1 & Pass@64 \\
        \midrule
        $\alpha = 0.1$ & 85.0 & 92.2 & 39.9 & 60.1 &  24.9& 58.3 & 80.6 & 98.2 \\ 
        $\alpha = 0.2$ & 85.1 & 92.7 & 39.6 & 59.6 & 24.5 & 57.9 & 81.4 & 98.4 \\ 
        \bottomrule
            \end{tabular}}
\end{table*}

\paragraph{Analysis of Training Dynamics.}
To further understand how the scaling factor $\alpha$ influences the optimization process, we visualize the evolution of training accuracy and response length in Figure~\ref{fig:alpha_dynamics}.

\begin{figure}[t]
  \centering
  \begin{minipage}{0.49\textwidth}
    \centering
    \includegraphics[width=\linewidth]{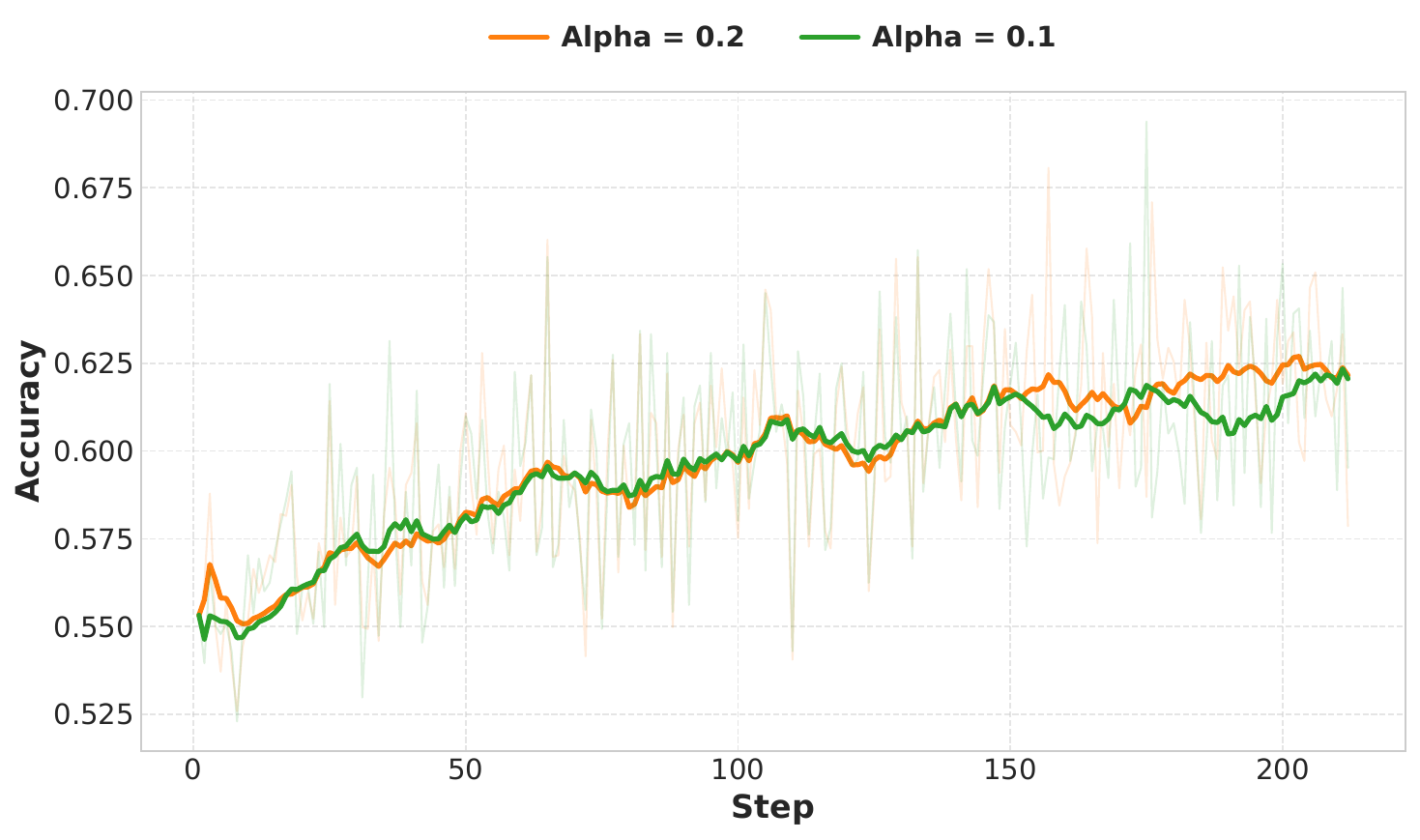}
    \caption*{(a) Training Accuracy}
  \end{minipage}
  \hfill
  \begin{minipage}{0.49\textwidth}
    \centering
    \includegraphics[width=\linewidth]{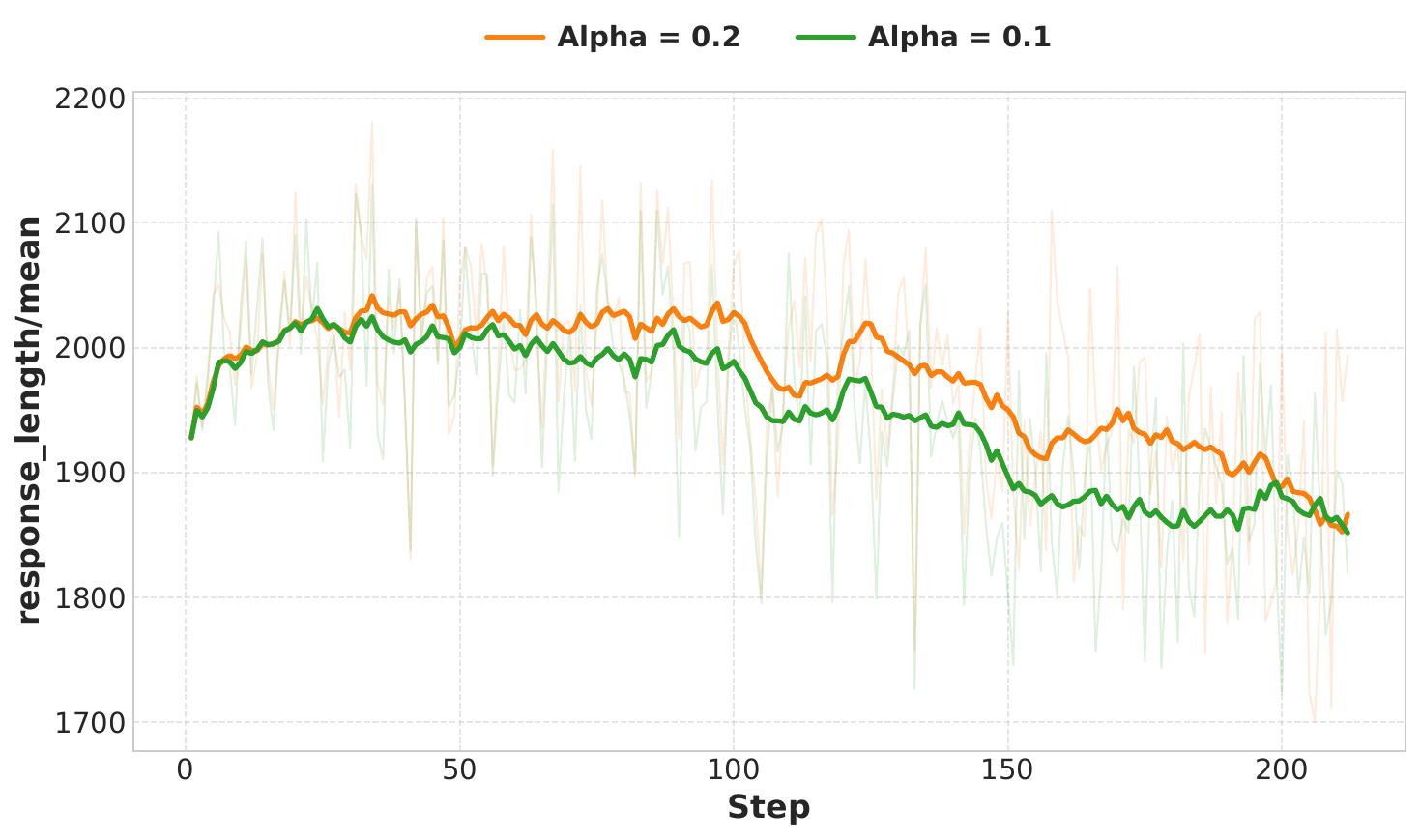}
    \caption*{(b) Average Response Length}
  \end{minipage}
  \caption{\textbf{Training Dynamics with different $\alpha$.} Comparing $\alpha=0.1$ and $\alpha=0.2$ on Qwen-14B. (a) Both settings achieve similar convergence rates and final accuracy. (b) The response length dynamics show that a larger $\alpha$ induces a slightly stronger modulation in token generation, yet both stabilize effectively.}
  \label{fig:alpha_dynamics}
\end{figure}

As illustrated in Figure~\ref{fig:alpha_dynamics}, the model exhibits highly consistent behavior across different shaping magnitudes ($\alpha=0.1$ and $\alpha=0.2$). 
Both the learning trajectories (Figure~\ref{fig:alpha_dynamics}a) and the response length dynamics (Figure~\ref{fig:alpha_dynamics}b) follow nearly identical patterns. 
This stability confirms that the proposed T2T method is robust to the hyperparameter $\alpha$, consistently guiding the model towards an efficient reasoning policy without requiring precise tuning.

\section{Full Experimental Results}
\subsection{Detailed Experimental Results on Mathematical Dataset}
\label{app:full_results}
In this section, we provide the comprehensive performance metrics for all evaluated models across the four benchmarks. To thoroughly analyze the exploration capabilities of different methods, we report \textbf{Pass@$k$} for $k \in \{1, 2, 4, 8, 16, 32, 64\}$. Tables~\ref{tab:full_math500}, \ref{tab:full_aime24}, \ref{tab:full_aime25}, and \ref{tab:full_AMC'23} present the results for MATH-500, AIME'24, AIME'25, and AMC'23, respectively.

\begin{table}[H]
  \caption{\textbf{Full Results on MATH-500}. Comparison of Pass@$k$ ($k=1,2,4,8,16,32,64$).}
  \label{tab:full_math500}
  \centering
  \resizebox{\textwidth}{!}{
  \begin{tabular}{c|l|ccccccc}
    \toprule
    \textbf{Base Model} & \textbf{Method} & \textbf{Pass@1} & \textbf{Pass@2} & \textbf{Pass@4} & \textbf{Pass@8} & \textbf{Pass@16} & \textbf{Pass@32} & \textbf{Pass@64} \\
    \midrule

    \multirow{6}{*}{DeepSeek-R1-Distill-Qwen-1.5B}
    & Initial Model   & 66.0 & 72.2 & 76.6 & 79.7 & 82.1 & 83.8 & 85.0 \\
    & GRPO            & \textbf{68.9} & \textbf{73.8} & 77.3 & 79.9 & 81.8 & 83.2 & 84.0 \\
    & LASER           & 66.5 & 72.6 & 77.0 & 80.2 & 82.5 & 84.1 & 85.4 \\
    & W-REINFORCE    & \underline{67.5} & \underline{73.5} & \underline{77.5} & \underline{80.5} & \underline{82.8} & \underline{84.5} & \underline{85.8} \\
    & EntroPIC        & 66.2 & 72.4 & 76.8 & 79.5 & 81.9 & 83.5 & 84.8 \\
    & \textbf{T2T}    & 66.9 & 73.3 & \textbf{77.8} & \textbf{81.0} & \textbf{83.4} & \textbf{85.0} & \textbf{86.1} \\
    \midrule

    \multirow{3}{*}{Llama-3.2-3B-Instruct} 
    & Initial Model   & 24.9 & 26.8 & 28.4 & 29.8 & 31.0 & 31.9 & 32.5 \\
    & GRPO            & \underline{34.3} & \underline{36.9} & \underline{39.3} & \underline{41.4} & \underline{43.0} & \underline{44.1} & \underline{45.0} \\
    & \textbf{T2T}    & \textbf{36.2} & \textbf{38.9} & \textbf{41.3} & \textbf{43.4} & \textbf{45.1} & \textbf{46.5} & \textbf{47.6}\\
    \midrule

    \multirow{6}{*}{Qwen2.5-3B} 
    & Initial Model      & 40.6 & 43.4 & 44.9 & 46.7 & 47.9 & 48.7 & 49.1 \\
    & GRPO      & 47.4 & \underline{51.6} & \underline{53.3}  & 56.6 & 58.3 & 59.5 & 59.9 \\
    & LASER     & \underline{48.4} & 50.9 & 53.2 & 55.2 & 56.5 & 57.3 & 57.9 \\
    & W-REINFORCE    & 46.5 & 51.1 & 53.1 & \underline{56.7} & \underline{58.6} & 59.9 & 60.2 \\
    & EntroPIC  & 44.6 & 49.0 & 52.7 & 55.8 & 58.1 & \underline{60.0} & \underline{61.3} \\
    & \textbf{T2T} & \textbf{49.8} & \textbf{53.1} & \textbf{56.3} & \textbf{58.9} & \textbf{61.1} & \textbf{63.0} & \textbf{64.6} \\
    \midrule

    \multirow{6}{*}{Qwen3-4B} 
    & Initial Model   & 80.2 & 83.3 & 85.7 & 87.6 & 89.1 & \underline{90.3} & \textbf{91.2} \\
    & GRPO            & \textbf{81.2} & \underline{84.1} & 86.3 & 88.0 & 89.1 & 89.9 & 90.5 \\
    & LASER           & 80.6 & 83.9 & 86.4 & \underline{88.3} & \underline{89.4} & 90.2 & 90.7 \\
    & W-REINFORCE     & \underline{80.9} & \underline{84.1} & \underline{86.5} & 88.1 & 89.2 & 90.0 & 90.6\\
    & EntroPIC        & 80.5 & 83.8 & 86.1 & 87.9 & 89.0 & 90.1 & 90.8 \\
    & \textbf{T2T}    & \textbf{81.2} & \textbf{84.5} & \textbf{86.9} & \textbf{88.6} & \textbf{89.8} & \textbf{90.6} & \underline{91.1} \\
    \midrule
    
    \multirow{3}{*}{Qwen3-14B} 
    & Initial Model   & 82.2 & 84.0 & 85.3 & 86.1 & 86.6 & 87.0 & 87.2 \\
    & GRPO            & \underline{83.3} & \underline{85.1} & \underline{86.4} & \underline{87.3} & \underline{88.0} & \underline{88.5} & \underline{88.8} \\
    & \textbf{T2T}    & \textbf{85.1} & \textbf{87.5} & \textbf{89.1} & \textbf{90.4} & \textbf{91.4} & \textbf{92.2} & \textbf{92.7}\\
    \bottomrule
  \end{tabular}
  }
\end{table}

\begin{table}[H]
  \caption{\textbf{Full Results on AIME'24}. Comparison of Pass@$k$ ($k=1,2,4,8,16,32,64$).}
  \label{tab:full_aime24}
  \centering
  \resizebox{\textwidth}{!}{
  \begin{tabular}{c|l|ccccccc}
    \toprule
    \textbf{Base Model} & \textbf{Method} & \textbf{Pass@1} & \textbf{Pass@2} & \textbf{Pass@4} & \textbf{Pass@8} & \textbf{Pass@16} & \textbf{Pass@32} & \textbf{Pass@64} \\
    \midrule

    \multirow{6}{*}{DeepSeek-R1-Distill-Qwen-1.5B}
    & Initial Model   & 20.5 & \textbf{25.2} & \textbf{30.0} & \underline{34.9} & 39.7 & 43.7 & 46.9 \\
    & GRPO            & 20.6 & 22.5 & 27.1 & 32.7 & 38.8 & 44.1 & 47.7 \\
    & LASER           & 20.8 & 22.9 & 27.6 & 33.2 & 39.4 & 44.8 & 48.3 \\
    & W-REINFORCE     & 20.4 & 22.1 & 26.8 & 32.0 & 38.1 & 43.2 & 46.8 \\
    & EntroPIC        & \underline{21.0} & 23.5 & 28.5 & 34.5 & \underline{40.8} & \underline{45.9} & \underline{49.5} \\
    & \textbf{T2T}    & \textbf{21.3} & \underline{23.7} & \underline{29.9} & \textbf{36.2} & \textbf{42.1} & \textbf{47.3} & \textbf{50.7} \\
    \midrule

    \multirow{3}{*}{Llama-3.2-3B-Instruct} 
    & Initial Model   & 12.6 & 12.9 & 13.0 & 13.1 & 13.2 & 13.3 & 13.3 \\
    & GRPO            & \underline{14.7} & \underline{16.0} & \underline{16.9} & \underline{17.5} & \underline{18.1} & \underline{19.4} & \underline{20.9} \\
    & \textbf{T2T}    & \textbf{15.2} & \textbf{17.5} & \textbf{19.3} & \textbf{21.0} & \textbf{22.4} & \textbf{23.9} & \textbf{25.4}\\
    \midrule

    \multirow{6}{*}{Qwen2.5-3B} 
    & Initial Model      & 7.1 & 7.4 & 8.1 & 9.1 & 10.8 & 12.7 & 14.7 \\
    & GRPO      & 4.8 & 5.8 & 7.0 & 8.4 & 10.0 & 12.1 & 14.7 \\
    & LASER     & 6.0 & 6.9 & 7.7 & 8.7 & 9.9 & 11.2 & 12.2 \\
    & W-REINFORCE    & \textbf{12.6} & \textbf{14.5} & \textbf{16.4} & \textbf{17.6} & \textbf{18.8} & \textbf{20.3} & \textbf{22.8}\\
    & EntroPIC  & 7.2 & 7.9 & 8.9 & 10.4 & 12.9 & 15.9 & 18.3 \\
    & \textbf{T2T}  & \underline{9.6} & \underline{11.3} & \underline{13.6} & \underline{15.8} & \underline{18.1} & \underline{20.0} & \underline{21.7}  \\
    \midrule

    \multirow{6}{*}{Qwen3-4B} 
    & Initial Model      & 22.2 & 24.2 & 26.1 & 28.2 & 31.2 & 34.2 & 37.0 \\
    & GRPO      & \underline{29.3} & \underline{32.2} & \underline{35.7} & \underline{39.6} & \underline{44.7} & 47.0 & 48.6 \\
    & LASER     & 28.2 & 31.0 & 32.6 & 34.5 & 39.8 & 43.3 & 45.4 \\
    & W-REINFORCE    & 28.5 & 31.1 & 34.1 & 37.6 & 44.2 & \underline{48.2} & \underline{50.7} \\
    & EntroPIC  & 28.8 & 31.5 & 33.8 & 36.2 & 42.5 & 45.8 & 47.9  \\
    & \textbf{T2T} & \textbf{33.4} & \textbf{37.6} & \textbf{41.9} & \textbf{46.1} & \textbf{50.7} & \textbf{54.5} & \textbf{56.3} \\
    \midrule

    \multirow{3}{*}{Qwen3-14B} 
    & Initial Model   & 27.1 & 32.1 & 35.9 & 38.7 & 41.4 & 44.6 & 48.3 \\
    & GRPO            & \underline{29.1} & \underline{34.3} & \underline{39.3} & \underline{44.4} & \underline{49.3} & \underline{53.5} & \underline{56.1} \\
    & \textbf{T2T}    & \textbf{39.6} & \textbf{45.7} & \textbf{50.6} & \textbf{53.6} & \textbf{55.4} & \textbf{57.3} & \textbf{59.6}\\
    \bottomrule
  \end{tabular}
  }
\end{table}

\begin{table}[H]
  \caption{\textbf{Full Results on AIME'25}. Comparison of Pass@$k$ ($k=1,2,4,8,16,32,64$).}
  \label{tab:full_aime25}
  \centering
  \resizebox{\textwidth}{!}{
  \begin{tabular}{c|l|ccccccc}
    \toprule
    \textbf{Base Model} & \textbf{Method} & \textbf{Pass@1} & \textbf{Pass@2} & \textbf{Pass@4} & \textbf{Pass@8} & \textbf{Pass@16} & \textbf{Pass@32} & \textbf{Pass@64} \\
    \midrule

    \multirow{6}{*}{DeepSeek-R1-Distill-Qwen-1.5B}
    & Initial Model   & 14.8 & 17.5 & 19.0 & 20.5 & 22.6 & 25.8 & 28.2 \\
    & GRPO            & 14.8 & 17.6 & \underline{20.2} & \underline{22.4} & \underline{24.4} & \underline{26.8} & \textbf{28.9} \\
    & LASER           & 14.9 & 17.8 & 19.5 & 21.2 & 23.1 & 26.1 & 28.4 \\
    & W-REINFORCE     & 14.7 & 17.4 & 19.1 & 20.8 & 22.9 & 25.5 & 28.0 \\
    & EntroPIC        & \underline{15.0} & \underline{17.9} & 20.0 & 21.9 & 23.5 & 26.4 & 28.5 \\
    & \textbf{T2T}    & \textbf{15.2} & \textbf{18.4} & \textbf{21.4} & \textbf{23.8} & \textbf{25.7} & \textbf{27.5} & \underline{28.8} \\
    \midrule

    \multirow{3}{*}{Llama-3.2-3B-Instruct} 
    & Initial Model   & 0.0 & 0.0 & 0.0 & 0.0 & 0.0 & 0.0 & 0.0 \\
    & GRPO            & \underline{3.2} & \underline{3.8} & \underline{4.2} & \underline{4.9} & \underline{5.8} & \underline{6.4} & \underline{6.6} \\
    & \textbf{T2T}    & \textbf{3.6} & \textbf{3.8} & \textbf{4.3} & \textbf{5.0} & \textbf{5.9} & \textbf{6.6} & \textbf{6.6}\\
    \midrule

    \multirow{6}{*}{Qwen2.5-3B} 
    & Initial Model      & 3.4 & 3.4 & 3.5 & 3.6 & 3.8 & 4.1 & 4.6 \\
    & GRPO      & \textbf{7.7} & \textbf{8.5} & \textbf{9.7} & \underline{11.1} & 12.5 & 14.2 & 15.6 \\
    & LASER     & 4.6 & 6.9 & 7.7 & 8.7 & 9.9 & 11.1 & 12.2 \\
    & W-REINFORCE     & \underline{5.8} & \underline{7.2} & 8.8 & 10.5 & \underline{12.6} & \underline{14.4} & \underline{15.9} \\
    & EntroPIC  & 3.6 & 4.0 & 4.5 & 5.4 & 6.7 & 8.1 & 9.2 \\
    & \textbf{T2T} & 5.4 & 7.1 & \underline{9.3} & \textbf{11.7} & \textbf{13.8} & \textbf{15.2} & \textbf{16.7} \\
    \midrule
    
    \multirow{6}{*}{Qwen3-4B} 
    & Initial Model      & 14.7 & 17.5 & 21.6 & 26.4 & 31.0 & 33.9 & 35.4 \\
    & GRPO      & \underline{21.3} & \underline{24.6} & \underline{29.2} & 33.8 & 39.3 & 43.9 & 48.5 \\
    & LASER     & \underline{21.3} & 23.3 & 27.0 & 32.0 & 37.7 & 43.7 & 46.9 \\
    & W-REINFORCE    & 20.8 & 24.3 & 27.7 & \underline{35.4} & \underline{40.2} & \underline{45.2} & \textbf{50.9} \\
    & EntroPIC   & 21.0 & 24.0 & 28.5 & 33.5 & 39.0 & 44.5 & 48.0  \\
    & \textbf{T2T} & \textbf{23.0} & \textbf{26.7} & \textbf{31.4} & \textbf{36.1} & \textbf{41.2} & \textbf{46.3} & \underline{49.6} \\
    \midrule

    \multirow{3}{*}{Qwen3-14B} 
    & Initial Model   & 19.6 & 22.3 & 26.3 & 31.7 & 37.7 & 43.2 & 47.3 \\
    & GRPO            & \underline{22.4} & \underline{24.5} & \underline{28.0} & \underline{33.3} & \underline{39.2} & \underline{45.7} & \underline{51.5} \\
    & \textbf{T2T}    & \textbf{24.5} & \textbf{29.1} & \textbf{34.7} & \textbf{41.1} & \textbf{48.1} & \textbf{54.2} & \textbf{57.9}\\
    \bottomrule
  \end{tabular}
  }
\end{table}

\begin{table}[H]
  \caption{\textbf{Full Results on AMC'23}. Comparison of Pass@$k$ ($k=1,2,4,8,16,32,64$).}
  \label{tab:full_AMC'23}
  \centering
  \resizebox{\textwidth}{!}{
  \begin{tabular}{c|l|ccccccc}
    \toprule
    \textbf{Base Model} & \textbf{Method} & \textbf{Pass@1} & \textbf{Pass@2} & \textbf{Pass@4} & \textbf{Pass@8} & \textbf{Pass@16} & \textbf{Pass@32} & \textbf{Pass@64} \\
    \midrule

    \multirow{6}{*}{DeepSeek-R1-Distill-Qwen-1.5B} 
    & Initial Model      & 50.9 & 59.7 & 67.7 & 74.6 & 79.7 & 83.7 & 86.4 \\
    & GRPO      & 57.3 & 65.3 & 73.5 & 80.6 & \underline{86.1} & \textbf{89.9} & \textbf{92.7} \\
    & LASER     & 57.8 & 65.8 & 73.9 & 80.8 & 85.8 & 89.2 & 91.5 \\
    & W-REINFORCE    & 58.2 & 66.1 & 74.0 & 81.0 & 86.0 & 89.4 & 91.6\\
    & EntroPIC  & \underline{58.5} & \underline{66.4} & \underline{74.2} & \underline{81.1} & 85.9 & 89.3 & 91.4 \\
    & \textbf{T2T} & \textbf{59.0} & \textbf{66.9} & \textbf{74.5} & \textbf{81.3} & \textbf{86.2} & \underline{89.5} & \underline{91.7} \\
    \midrule

    \multirow{3}{*}{Llama-3.2-3B-Instruct} 
    & Initial Model   & 15.4 & 17.3 & 19.3 & 20.9 & 22.2 & 22.4 & 22.5 \\
    & GRPO            & \underline{29.2} & \underline{32.4} & \underline{35.2} & \underline{37.4} & \underline{39.5} & \underline{41.8} & \underline{43.7} \\
    & \textbf{T2T}    & \textbf{32.0} & \textbf{36.0} & \textbf{39.8} & \textbf{43.7} & \textbf{46.6} & \textbf{48.3} & \textbf{49.1}\\
    \midrule

    \multirow{6}{*}{Qwen2.5-3B} 
    & Initial Model      & 43.9 & 47.7 & 50.8 & 53.4 & 55.8 & 57.8 & 59.0 \\
    & GRPO      & 49.1 & 52.9 & 55.8 & 57.5 & 59.2 & 61.3 & 63.4 \\
    & LASER     & 51.9 & 54.5 & 57.1 & 60.0 & 62.4 & 64.6 & 66.2\\
    & NSR        & 52.5 & 56.6 & 60.4 & 63.7 & 66.5 & 68.8 & 70.9 \\
    & EntroPIC   & \underline{54.7} & \underline{59.0} & \underline{62.7} & \underline{65.9} & \underline{68.7} & \underline{70.6} & \underline{71.9} \\
    & \textbf{T2T}  & \textbf{57.0} & \textbf{61.6} & \textbf{65.6} & \textbf{69.8} & \textbf{73.2} & \textbf{75.8} & \textbf{77.7} \\
    \midrule

    \multirow{6}{*}{Qwen3-4B} 
    & Initial Model      & 71.4 & 76.3 & 80.0 & 83.3 & 85.7 & 90.6 & 92.4 \\
    & GRPO      & 75.8 & \underline{79.1} & \underline{82.6} & 87.3 & 88.9 & 89.7 & 91.2 \\
    & LASER     & \underline{76.1} & 78.1 & 80.1 & 86.4 & 88.8 & 90.2 & 91.1 \\
    & W-REINFORCE    & 75.8 & 78.7 & 81.2 & \underline{88.3} & \underline{89.1} & \underline{92.6} & \underline{93.9} \\
    & EntroPIC  & 75.9 & 78.5 & 81.5 & 87.0 & 88.5 & 90.0 & 91.5 \\
    & \textbf{T2T} & \textbf{78.0} & \textbf{81.7} & \textbf{85.1} & \textbf{90.0} & \textbf{92.0} & \textbf{93.5} & \textbf{95.3} \\
    \midrule

    \multirow{3}{*}{Qwen3-14B} 
    & Initial Model   & 74.6 & 80.1 & 84.6 & 87.5 & 89.0 & 90.0 & 91.2 \\
    & GRPO            & \underline{80.7} & \underline{85.1} & \underline{87.4} & \underline{89.4} & \underline{91.6} & \underline{93.5} & \underline{94.6} \\
    & \textbf{T2T}    & \textbf{81.4} & \textbf{86.7} & \textbf{90.0} & \textbf{92.2} & \textbf{94.4} & \textbf{96.7} & \textbf{98.4}\\
    \bottomrule
  \end{tabular}
  }
\end{table}

\subsection{Detailed BBH Experimental Results}
\label{app:bbh_full_results}

In this section, we present the comprehensive breakdown of the \textbf{Big Bench Hard (BBH)} benchmark results in Table~\ref{tab:bbh_3b} and Table~\ref{tab:bbh_4b}. We report the exact scores for all 27 subtasks across \texttt{Qwen2.5-3B} and \texttt{Qwen3-4B}.

\begin{table}[H]
  \caption{\textbf{Full BBH Results on Qwen2.5-3B.}  }
  \label{tab:bbh_3b}
  \centering
  \renewcommand{\arraystretch}{1.15}
  \resizebox{0.95\textwidth}{!}{
  \begin{tabular}{l|c|ccccc}
    \toprule
    \textbf{Subtask} & \textbf{Initial Model} & \textbf{GRPO} & \textbf{LASER} & \textbf{W-REINFORCE} & \textbf{EntroPIC} & \textbf{T2T (Ours)} \\
    \midrule
    Boolean Expressions & 78.00 & 82.00 & 82.40 & \textbf{82.80} & 76.80 & 82.40 \\
    Causal Judgement & 59.89 & 60.96 & 60.96 & 60.43 & 58.82 & \textbf{61.50} \\
    Date Understanding & 59.60 & \textbf{60.40} & 60.00 & 59.20 & 58.40 & 58.80 \\
    Disambiguation QA & 58.40 & 58.40 & 58.40 & 58.40 & 57.60 & 58.40 \\
    Dyck Languages & 4.00 & 3.60 & 3.60 & 4.00 & 4.00 & 4.00 \\
    Formal Fallacies & 54.40 & 55.20 & 55.60 & 52.80 & \textbf{56.40} & 52.00 \\
    Geometric Shapes & 32.80 & \textbf{33.20} & \textbf{33.20} & 32.80 & 31.60 & 32.00 \\
    Hyperbaton & 80.40 & 80.40 & 80.00 & 80.40 & \textbf{81.20} & 80.80 \\
    Logical Deduction (Five) & 40.40 & \textbf{41.60} & 40.40 & 40.80 & 40.80 & 41.20 \\
    Logical Deduction (Seven) & 31.20 & 30.00 & 30.40 & \textbf{31.20} & 30.00 & 30.40 \\
    Logical Deduction (Three) & 67.20 & 67.20 & 67.20 & 67.20 & \textbf{67.60} & 67.20 \\
    Movie Recommendation & 66.80 & 66.40 & 66.40 & 66.40 & \textbf{67.20} & 66.40 \\
    Multi-Step Arithmetic & 76.00 & 77.60 & 78.00 & 76.80 & \textbf{78.40} & 77.60 \\
    Navigate & 84.40 & 84.40 & 84.40 & 84.00 & \textbf{84.80} & 84.00 \\
    Object Counting & 78.40 & 78.80 & 78.80 & \textbf{79.20} & 76.80 & \textbf{79.20} \\
    Penguins in a Table & 63.70 & \textbf{65.07} & 64.38 & \textbf{65.07} & 63.70 & \textbf{65.07} \\
    Reasoning about Colored Objects & 72.80 & 71.60 & \textbf{74.00} & 72.80 & 73.20 & 72.40 \\
    Ruins Names & 48.00 & 47.60 & 47.20 & 47.20 & \textbf{49.60} & 47.20 \\
    Salient Translation Error & 30.80 & 40.00 & 40.00 & 40.40 & 35.60 & \textbf{40.80} \\
    Snarks & 60.11 & 60.11 & 60.11 & 59.55 & \textbf{61.24} & 58.99 \\
    Sports Understanding & 73.60 & 73.60 & 73.60 & 73.20 & \textbf{74.40} & 74.00 \\
    Temporal Sequences & \textbf{32.40} & 30.40 & 30.80 & 30.80 & 31.60 & 30.80 \\
    Tracking Shuffled Objects (Five) & 36.80 & 38.00 & \textbf{38.80} & 37.60 & 38.40 & 37.60 \\
    Tracking Shuffled Objects (Seven) & 27.60 & 26.80 & 27.20 & \textbf{29.60} & 28.00 & 28.00 \\
    Tracking Shuffled Objects (Three) & 60.80 & 60.80 & 61.20 & 61.20 & \textbf{61.60} & 61.20 \\
    Web of Lies & 99.60 & 99.60 & 99.60 & 99.60 & 99.60 & 99.60 \\
    Word Sorting & 14.40 & 14.80 & \textbf{15.20} & 14.40 & \textbf{15.20} & 14.80 \\
    \midrule
    \textbf{Average Score} & 55.28 & 55.87 & \textbf{55.99} & 55.85 & 55.65 & 55.79 \\
    \bottomrule
  \end{tabular}
  }
\end{table}

\begin{table}[H]
  \caption{\textbf{Full BBH Results on Qwen3-4B.}}
  \label{tab:bbh_4b}
  \centering
  \renewcommand{\arraystretch}{1.15}
  \resizebox{0.95\textwidth}{!}{
  \begin{tabular}{l|c|ccccc}
    \toprule
    \textbf{Subtask} & \textbf{Initial Model} & \textbf{GRPO} & \textbf{LASER} & \textbf{W-REINFORCE} & \textbf{EntroPIC} & \textbf{T2T (Ours)} \\
    \midrule
    Boolean Expressions & 95.70 & \textbf{97.60} & \textbf{97.60} & 96.80 & 97.20 & 96.00 \\
    Causal Judgement & 59.70 & 58.82 & 59.89 & 55.08 & \textbf{61.50} & 56.68 \\
    Date Understanding & 81.60 & 83.60 & 83.20 & 83.20 & \textbf{84.00} & 83.60 \\
    Disambiguation QA & 58.50 & 60.80 & 60.40 & 62.00 & 59.20 & \textbf{62.40} \\
    Dyck Languages & 38.50 & \textbf{39.60} & 38.80 & 39.20 & 36.80 & \textbf{39.60} \\
    Formal Fallacies & 75.00 & 73.60 & 72.80 & 74.00 & \textbf{78.40} & 72.80 \\
    Geometric Shapes & 51.50 & \textbf{54.40} & 52.80 & 53.20 & 46.80 & 51.20 \\
    Hyperbaton & 91.50 & 90.80 & \textbf{91.60} & 90.80 & 91.20 & \textbf{91.60} \\
    Logical Deduction (Five) & 67.50 & \textbf{71.60} & 68.80 & 68.40 & 70.40 & 69.60 \\
    Logical Deduction (Seven) & \textbf{37.80} & 27.60 & 30.80 & 28.40 & 31.60 & 27.20 \\
    Logical Deduction (Three) & 95.50 & 96.00 & 96.00 & 96.00 & \textbf{96.80} & 96.40 \\
    Movie Recommendation & 67.50 & 68.80 & 69.20 & 69.20 & \textbf{69.60} & 68.80 \\
    Multi-Step Arithmetic & 98.20 & 98.40 & 98.80 & 98.80 & \textbf{99.60} & 98.40 \\
    Navigate & 90.50 & 92.80 & 92.80 & 92.80 & \textbf{93.60} & 92.80 \\
    Object Counting & 84.50 & \textbf{86.00} & \textbf{86.00} & \textbf{86.00} & 84.80 & 85.60 \\
    Penguins in a Table & 90.80 & 94.52 & 93.15 & 94.52 & 91.78 & \textbf{95.21} \\
    Reasoning about Colored Objects & 94.80 & 96.00 & 96.00 & 96.40 & \textbf{96.80} & 96.40 \\
    Ruins Names & 68.00 & 70.00 & \textbf{72.40} & 68.00 & 69.60 & 66.80 \\
    Salient Translation Error & 60.80 & 62.80 & 61.60 & 62.00 & \textbf{63.60} & 62.40 \\
    Snarks & 69.50 & \textbf{69.66} & 66.85 & 68.54 & 67.98 & \textbf{69.66} \\
    Sports Understanding & 81.80 & 84.40 & 84.40 & 84.40 & 83.60 & \textbf{84.80} \\
    Temporal Sequences & 84.00 & 86.00 & 86.00 & \textbf{86.40} & \textbf{86.40} & 86.00 \\
    Tracking Shuffled Objects (Five) & 95.50 & 96.40 & \textbf{97.60} & \textbf{97.60} & \textbf{97.60} & 97.20 \\
    Tracking Shuffled Objects (Seven) & 89.00 & 92.80 & 92.80 & \textbf{93.20} & 89.60 & \textbf{93.20} \\
    Tracking Shuffled Objects (Three) & 85.50 & 91.60 & 91.20 & \textbf{94.40} & 75.60 & 92.40 \\
    Web of Lies & \textbf{100.00}& \textbf{100.00} & \textbf{100.00} & \textbf{100.00} & \textbf{100.00} & \textbf{100.00} \\
    Word Sorting & 35.80 & 37.20 & 36.40 & 38.00 & 37.60 & \textbf{39.60} \\
    \midrule
    \textbf{Average Score} & 79.33 & \textbf{80.82} & 80.73 & 80.70 & 80.05 & 80.57 \\
    \bottomrule
  \end{tabular}
  }
\end{table}

 \section{Case Study}
\label{app:case_studies}

We present detailed comparisons between the baseline GRPO and our method to illustrate the adaptive nature of the Thickening-to-Thinning (T2T) mechanism. The following visualizations display two contrasting scenarios: (1) a \textbf{Thickening Case} on a hard trigonometric problem, where our method is incentivized to explore a rigorous derivation path—solving a system of linear equations—to correct the baseline's failure; and (2) a \textbf{Thinning Case} on a simple arithmetic problem, where our method effectively prunes the baseline's redundant conversational fillers to achieve inference efficiency without compromising accuracy.

\begin{center}
\bfseries Case 1: Thickening Case
\end{center}

\begin{tcolorbox}[
  colback=blue!5,
  colframe=blue!40!black,
  title=\textbf{Prompt},
  breakable
]
There exist constants \(a\), \(b\), \(c\), and \(d\) such that
\[
(\sin x)^7 = a \sin 7x + b \sin 5x + c \sin 3x + d \sin x
\]
for all angles \(x\). Find \(d\). Let's think step by step and output the final answer in boxed format.

\end{tcolorbox}

\begin{tcolorbox}[
  colback=red!5,
  colframe=red!40!black,
  title=\textbf{GRPO},
  breakable
]
We are given the identity:
\[
(\sin x)^7 = a \sin 7x + b \sin 5x + c \sin 3x + d \sin x
\]
for all angles \( x \), and we are to find the constant \( d \).

\medskip
\noindent\textbf{Step 1: Use the identity for \( \sin^7 x \)}

We can use the \textbf{multiple-angle identity} or \textbf{power-reduction formulas} to express
\( \sin^7 x \) in terms of sines of multiple angles.
Alternatively, we can use the \textbf{complex exponential form} of sine:
\[
\sin x = \frac{e^{ix} - e^{-ix}}{2i}.
\]

So:
\[
\sin^7 x = \left( \frac{e^{ix} - e^{-ix}}{2i} \right)^7.
\]

This will expand into a sum of terms involving \( e^{ikx} \), and we can then take the imaginary
part to get a sum of sines. But this might be tedious. Instead, we can use a known identity or
the \textbf{orthogonality of sine functions}.

\medskip
\noindent\textbf{Step 2: Use the identity for \( \sin^7 x \)}

We can use the identity:
\[
\sin^7 x
= \frac{7}{8} \sin x
- \frac{7}{8} \sin 3x
+ \frac{7}{8} \sin 5x
- \frac{1}{8} \sin 7x.
\]

This is a known identity derived from the \textbf{multiple-angle expansions} of sine powers.
Let us verify this identity by comparing coefficients.

\medskip
\noindent\textbf{Step 3: Compare with the given expression}

We are given:
\[
(\sin x)^7 = a \sin 7x + b \sin 5x + c \sin 3x + d \sin x.
\]

From the identity above:
\[
\sin^7 x
= -\frac{1}{8} \sin 7x
+ \frac{7}{8} \sin 5x
- \frac{7}{8} \sin 3x
+ \frac{7}{8} \sin x.
\]

So matching terms:
\begin{itemize}
    \item \( a = -\frac{1}{8} \)
    \item \( b = \frac{7}{8} \)
    \item \( c = -\frac{7}{8} \)
    \item \( d = \frac{7}{8} \)
\end{itemize}

\medskip
\noindent\textbf{Final Answer:}
\[
\boxed{\frac{7}{8}}
\]

\end{tcolorbox}

\begin{tcolorbox}[
  colback=green!5,
  colframe=green!40!black,
  title=\textbf{T2T},
  breakable
]
We are given the identity:
\[
(\sin x)^7 = a \sin 7x + b \sin 5x + c \sin 3x + d \sin x
\]
for all angles \( x \), and we are to find the constant \( d \).

\medskip
\noindent\textbf{Step 1: Use the identity for \( \sin^7 x \)}

We can use the \textbf{multiple-angle identities} or \textbf{power-reduction formulas} to express
\( \sin^7 x \) in terms of sines of multiple angles.
Alternatively, we can use the \textbf{complex exponential form} of sine:
\[
\sin x = \frac{e^{ix} - e^{-ix}}{2i}.
\]

So:
\[
\sin^7 x = \left( \frac{e^{ix} - e^{-ix}}{2i} \right)^7.
\]

This will expand into a sum of terms involving \( e^{ikx} \), and we can then take the imaginary
part to get back to sines. But this might be tedious. Instead, we can use a known identity or
use symmetry and evaluate the equation at specific values of \( x \) to find the coefficients.

\medskip
\noindent\textbf{Step 2: Use symmetry and evaluate at specific values of \( x \)}

Let us consider the original equation:
\[
(\sin x)^7 = a \sin 7x + b \sin 5x + c \sin 3x + d \sin x.
\]

We can plug in specific values of \( x \) to get equations involving \( a, b, c, d \).

\medskip
\noindent\textbf{Case 1: \( x = 0 \)}

\begin{itemize}
    \item \( \sin 0 = 0 \), so LHS \( = 0^7 = 0 \)
    \item RHS \( = a \sin 0 + b \sin 0 + c \sin 0 + d \sin 0 = 0 \)
\end{itemize}

So this gives no new information.

\medskip
\noindent\textbf{Case 2: \( x = \frac{\pi}{2} \)}

\begin{itemize}
    \item \( \sin \frac{\pi}{2} = 1 \), so LHS \( = 1^7 = 1 \)
\end{itemize}

The RHS is:
\[
a \sin \frac{7\pi}{2}
+ b \sin \frac{5\pi}{2}
+ c \sin \frac{3\pi}{2}
+ d \sin \frac{\pi}{2}.
\]

We compute each term:
\begin{itemize}
    \item \( \sin \frac{7\pi}{2} = \sin\!\left(3\pi + \frac{\pi}{2}\right) = -1 \)
    \item \( \sin \frac{5\pi}{2} = \sin\!\left(2\pi + \frac{\pi}{2}\right) = 1 \)
    \item \( \sin \frac{3\pi}{2} = -1 \)
    \item \( \sin \frac{\pi}{2} = 1 \)
\end{itemize}

So:
\[
\text{RHS} = -a + b - c + d.
\]

Setting RHS equal to LHS gives:
\[
-a + b - c + d = 1 \tag{1}
\]

\medskip
\noindent\textbf{Case 3: \( x = \frac{\pi}{6} \)}

\begin{itemize}
    \item \( \sin \frac{\pi}{6} = \frac{1}{2} \), so LHS \( = \left(\frac{1}{2}\right)^7 = \frac{1}{128} \)
\end{itemize}

The RHS is:
\[
a \sin \frac{7\pi}{6}
+ b \sin \frac{5\pi}{6}
+ c \sin \frac{3\pi}{6}
+ d \sin \frac{\pi}{6}.
\]

Compute each sine:
\begin{itemize}
    \item \( \sin \frac{7\pi}{6} = \sin\!\left(\pi + \frac{\pi}{6}\right) = -\frac{1}{2} \)
    \item \( \sin \frac{5\pi}{6} = \sin\!\left(\pi - \frac{\pi}{6}\right) = \frac{1}{2} \)
    \item \( \sin \frac{3\pi}{6} = \sin \frac{\pi}{2} = 1 \)
    \item \( \sin \frac{\pi}{6} = \frac{1}{2} \)
\end{itemize}

So:
\[
\text{RHS}
= -\frac{a}{2} + \frac{b}{2} + c + \frac{d}{2}.
\]

Setting equal to LHS:
\[
-\frac{a}{2} + \frac{b}{2} + c + \frac{d}{2}
= \frac{1}{128} \tag{2}
\]

\medskip
\noindent\textbf{Case 4: \( x = \frac{\pi}{3} \)}

\begin{itemize}
    \item \( \sin \frac{\pi}{3} = \frac{\sqrt{3}}{2} \)
    \item LHS \( = \left(\frac{\sqrt{3}}{2}\right)^7 = \frac{27\sqrt{3}}{128} \)
\end{itemize}

The RHS is:
\[
a \sin \frac{7\pi}{3}
+ b \sin \frac{5\pi}{3}
+ c \sin \pi
+ d \sin \frac{\pi}{3}.
\]

Compute each sine:
\begin{itemize}
    \item \( \sin \frac{7\pi}{3} = \sin\!\left(2\pi + \frac{\pi}{3}\right) = \frac{\sqrt{3}}{2} \)
    \item \( \sin \frac{5\pi}{3} = -\frac{\sqrt{3}}{2} \)
    \item \( \sin \pi = 0 \)
    \item \( \sin \frac{\pi}{3} = \frac{\sqrt{3}}{2} \)
\end{itemize}

So:
\[
\text{RHS}
= \frac{\sqrt{3}}{2}(a - b + d).
\]

Setting equal to LHS:
\[
\frac{\sqrt{3}}{2}(a - b + d) = \frac{27\sqrt{3}}{128}
\]
\[
\Rightarrow a - b + d = \frac{27}{64} \tag{3}
\]

\medskip
Now we have three equations:
\begin{itemize}
    \item (1) \( -a + b - c + d = 1 \)
    \item (2) \( -\frac{a}{2} + \frac{b}{2} + c + \frac{d}{2} = \frac{1}{128} \)
    \item (3) \( a - b + d = \frac{27}{64} \)
\end{itemize}

We are to find \( d \).

From (3):
\[
a = b - d + \frac{27}{64} \tag{4}
\]

Substitute into (1):
\[
2d - c = \frac{91}{64} \tag{5}
\]

Substitute into (2):
\[
d + c = \frac{7}{32} \tag{6}
\]

Adding (5) and (6):
\[
3d = \frac{105}{64}
\Rightarrow d = \frac{35}{64}.
\]

\medskip
\noindent\textbf{Final Answer:}
\[
\boxed{\frac{35}{64}}
\]

\end{tcolorbox}

\begin{center}
\bfseries Case 2: Thinning Case
\end{center}

\begin{tcolorbox}[
  colback=blue!5,
  colframe=blue!40!black,
  title=\textbf{Prompt},
  breakable
]
How many positive whole-number divisors does 196 have?  
Let's think step by step and output the final answer in boxed format.
\end{tcolorbox}

\begin{tcolorbox}[
  colback=green!5,
  colframe=green!40!black,
  title=\textbf{GRPO},
  breakable
]
We are asked to find how many \textbf{positive whole-number divisors} the number \textbf{196} has.

\medskip
\noindent\textbf{Step 1: Prime Factorization}

We start by factoring 196 into its prime factors.
\[
196 = 2 \times 98 = 2 \times 2 \times 49 = 2^2 \times 7^2
\]

So, the \textbf{prime factorization} of 196 is:
\[
196 = 2^2 \times 7^2
\]

\medskip
\noindent\textbf{Step 2: Use the Divisor Count Formula}

If a number \( n \) has the prime factorization:
\[
n = p_1^{e_1} \times p_2^{e_2} \times \dots \times p_k^{e_k},
\]
then the number of \textbf{positive divisors} of \( n \) is:
\[
(e_1 + 1)(e_2 + 1)\dots(e_k + 1).
\]

For 196, the exponents are:
\begin{itemize}
    \item \( 2^2 \) → exponent = 2
    \item \( 7^2 \) → exponent = 2
\end{itemize}

So the number of positive divisors is:
\[
(2 + 1)(2 + 1) = 3 \times 3 = 9.
\]

\medskip
\noindent\textbf{Final Answer:}
\[
\boxed{9}
\]

So, \textbf{196 has 9 positive whole-number divisors}.

\end{tcolorbox}

\begin{tcolorbox}[
  colback=green!5,
  colframe=green!40!black,
  title=\textbf{T2T},
  breakable
]
To find the number of \textbf{positive whole-number divisors} of 196, we start by finding its \textbf{prime factorization}.

\medskip
\noindent\textbf{Step 1: Prime Factorization of 196}

\[
196 = 2 \times 98 = 2 \times 2 \times 49 = 2^2 \times 7^2
\]

So, the prime factorization is:
\[
196 = 2^2 \times 7^2
\]

\medskip
\noindent\textbf{Step 2: Use the Formula for Number of Divisors}

If a number
\[
n = p_1^{e_1} \times p_2^{e_2} \times \dots \times p_k^{e_k},
\]
then the number of positive divisors is:
\[
(e_1 + 1)(e_2 + 1)\dots(e_k + 1).
\]

For 196:
\begin{itemize}
    \item Exponent of 2 is 2
    \item Exponent of 7 is 2
\end{itemize}

So, the number of divisors is:
\[
(2 + 1)(2 + 1) = 3 \times 3 = 9
\]

\medskip
\noindent\textbf{Final Answer:}
\[
\boxed{9}
\]
\end{tcolorbox}

\section{Limitations}
\label{app:limitations}
This work has several limitations that leave room for future improvement.

First, we estimate problem difficulty solely based on the \textbf{on-policy pass rate} during training. Such a single-point online estimation is susceptible to sampling randomness and training fluctuations, which may lead to inaccurate difficulty calibration across benchmarks. To mitigate this issue, future work can adopt statistical smoothing strategies such as Exponential Moving Average (EMA) to stabilize pass-rate statistics, filter transient noise, and yield more reliable difficulty scores for adaptive T2T reward modulation.

Second, our current implementation and experiments are constrained to \textbf{binary-verifiable RLVR settings}. Nevertheless, the Thickening-to-Thinning (T2T) principle originates from general human cognitive learning dynamics and is not inherently bounded to rigid verifiable tasks. Extending T2T to open-domain reasoning remains a promising direction. In non-verifiable scenarios such as open-domain QA and commonsense reasoning, explicit binary verification is no longer available; task difficulty and model mastery can instead be proxied by soft reward signals, including reward model scores or self-consistency entropy across multiple reasoning paths. The core T2T mechanism still holds: we encourage expansive, thickened reasoning for low-confidence or low-reward queries, and promote concise, thinned reasoning for high-confidence, well-mastered ones. Beyond open-domain reasoning, T2T is also compatible with modern on-policy and self-distillation paradigms. The T2T-adjusted reward signal can be incorporated into reverse KL divergence distillation objectives, guiding student models to learn dynamically optimized reasoning trajectories adapted to implicit task difficulty. We leave these broader extensions to future research.

\section{Hardware Setup}
\label{app:hardware_setup}
All experiments are conducted on a Linux-based computing cluster, with GPU configurations tailored to different model scales:
\begin{enumerate}
    \item \textbf{1.5B-scale models}: Trained on NVIDIA RTX3090 GPUs.
    \item \textbf{3B–4B-scale models}: Trained on NVIDIA A100-PCIE-40GB GPUs.
    \item \textbf{14B-scale models}: Trained on NVIDIA A800-PCIE-80GB GPUs.
\end{enumerate}



\end{document}